\def\eqref#1{equation~\ref{#1}}
\def\ceil#1{\lceil #1 \rceil}
\def\1{\bm{1}}
\DeclareMathAlphabet{\mathsfit}{\encodingdefault}{\sfdefault}{m}{sl}
\SetMathAlphabet{\mathsfit}{bold}{\encodingdefault}{\sfdefault}{bx}{n}
\newtheorem{theorem}{Theorem}
\newtheorem{lemma}{Lemma} 
\newtheorem*{lemma*}{Lemma}
\title{Low-dimensional statistical manifold embedding of directed graphs}
\author{\normalfont
\begin{tabular}[t]{l}\bf\rule{\abovedisplayshortskip}{24pt}\ignorespaces
  Thorben Funke \\
  L3S Research Center \\
  Leibniz University Hannover\\
  Hannover, Germany \\
  \end{tabular}\hfil\linebreak[0]\hfil \hspace{4.5cm}
  \begin{tabular}[t]{l}\bf\rule{\abovedisplayshortskip}{24pt}\ignorespaces
  Tian Guo \\
  Computationl Social Science\\
  ETH Z{\"u}rich \\
  Zurich, Switzerland \\
  \end{tabular}\hfil\linebreak[4]\hfil \\
  \begin{tabular}[t]{l}\bf\rule{\abovedisplayshortskip}{24pt}\ignorespaces
  Alen Lancic \\
  Faculty of Science\\
  Department of Mathematics \\
  University of Zagreb, Croatia \\
  \end{tabular}\hfil\linebreak[0]\hfil \hspace{4.5cm}
  \begin{tabular}[t]{l}\bf\rule{\abovedisplayshortskip}{24pt}\ignorespaces
  Nino Antulov-Fantulin \\
  Computationl Social Science\\
  ETH Z{\"u}rich \\
  Zurich, Switzerland \\
  \texttt{anino@ethz.ch} \\
  \end{tabular}\hfil\linebreak[0]\hfil
}
\begin{document}

\maketitle


\begin{abstract}
We propose a novel node embedding of directed graphs to statistical manifolds, which is based on a global minimization of pairwise relative entropy and graph geodesics in a non-linear way. 
Each node is encoded with a probability density function over a measurable space.
Furthermore, we analyze the connection between the geometrical properties of such embedding and their efficient learning procedure. 
Extensive experiments show that our proposed embedding is better in preserving the global geodesic information of graphs, as well as outperforming existing embedding models on directed graphs in a variety of evaluation metrics, in an unsupervised setting.
\end{abstract}

\section{Introduction}


In this publication, we study the directed graph embedding problem in an unsupervised learning setting. 
A graph embedding problem is usually defined as a problem of finding a vector representation $X \in \mathbb{R}^K$ for every node of a graph $G=(V, E)$ through a mapping $\phi\colon V \rightarrow X$.
On every graph $G=(V,E)$, defined with set of nodes $V$ and set of edges $E$, the distance $\operatorname{d}_G\colon V \times V \rightarrow \mathbb{R}_{+}$ between two vertices is defined as the number of edges connecting them in the shortest path, also called a graph geodesic.
In case that $X$ is equipped with a distance metric function $\operatorname{d}_X\colon X \times X \rightarrow \mathbb{R}_{+}$, we can quantify the embedding distortion by measuring the ratio $\operatorname{d}_X / \operatorname{d}_G$
between pairs of corresponding embedded points and nodes. 
Alternatively, one can measure the quality of the mapping by using a similarity function between points.
In contrast to distance, the similarity is usually a bounded value $[0,1]$ and is in some ad-hoc way connected to a notion of distance e.g. inverse of the distance.
Usually, we are interested in a low-dimensional ($K \ll n$) embedding of graphs with $n$ nodes as it is always possible 
to find an embedding~\citep{Linial1995} with $L_{\infty}$ norm with no distortion in an n-dimensional Euclidean space. 
Bourgain theorem (1985) \cite{Bourgain1985} proved that it is possible to construct an $O(\log(n))$-dimensional Euclidean embedding of an undirected graph with $n$ nodes with finite distortion.

For the last couple of decades, different communities such as machine learning~\citep{hamilton2017representation,MFembeddings,NeuralHyperbolic, DeepWalk, node2vec}, 
physics~\citep{Kriukov, HyberbolicPhysRevE, Cannistraci2017}, and computer science~\cite{Landmarks2009, Landmark2006} independently from mathematics community~\cite{Linial1995, Johnson1986, Frankl1988, Bourgain1985} developed novel graph representation techniques. 
In a nutshell, they can be characterized by having one of the following properties (i-iv). 
\textbf{(i) The Type of loss function} that quantifies the distortion is optimizing local neighbourhood~\cite{RoweisLLE2000, Hinton2003} or global graph structure properties~\cite{tenenbaumISOMap, DeepWalk, node2vec}.
\textbf{(ii) The target property} to be preserved is either geodesic (shortest paths)~\cite{Landmarks2009, Landmark2006, Frankl1988} or diffusion distance (heat or random walk process)~\cite{GraphPCA, DeepWalk, node2vec} or another similarity property~\cite{Hinton2003, vanDerMaaten2008, HOPE2016}.
\textbf{(iii) The mapping $\phi$} is a linear~\cite{NMFgraph, HOPE2016} or non-linear dimensionality reduction technique such as SNE~\cite{Hinton2003}, t-SNE~\cite{vanDerMaaten2008}, ISOMAP~\cite{tenenbaumISOMap}, Laplacian eigenmaps~\cite{Belkin2003} or deep learning work DeepWalk~\cite{DeepWalk}, node2vec~\cite{node2vec}, HOPE~\cite{HOPE2016}, GraphSAGE~\cite{GraphSAGE}, NetMF~\cite{MFembeddings} and others~\cite{hamilton2017representation}.
\textbf{(iv) The geometry} of $X$ has zero curvature (Euclidean)~\cite{Linial1995, Frankl1988}, positive curvature~\cite{Hancock2014} (spherical) or negative curvature ~\cite{Kriukov, Cannistraci2017, NeuralHyperbolic} (hyperbolic) or mixed curvatures \cite{Gu2019LearningMR}.

However, for directed graphs the asymmetry of graph shortest path distances $\operatorname{d}_G(v_i,v_j) \neq \operatorname{d}_G(v_j,v_i)$ is violating the symmetry assumption (not a metric function).
This is the reason why only recently this problem was tackled by constructing two independent representations for each node (source and target representations)~\cite{HOPE2016, APP2017, Avishek18}. 
In this paper, we propose a single-node embedding for directed graphs to the space of probability distributions, described with the geometry of statistical manifold theory.
Up to this point, different probability distributions were used for different embedding purposes such as: (i) applications to word embedding  \cite{Rudolph16,VilnisGaussWords}, knowledge graphs \cite{GaussKnowledgeGraphs}, attributed graphs \cite{bojchevski2017deep} or 
(ii) generalized point embeddings with elliptical distributions \cite{Muzellec18}, but not for low-dimensional embedding of directed graphs, characterized by the theory of statistical manifolds. 

The \textbf{main contributions} of this paper are:
(i) We propose a node embedding for directed graphs to the elements of the statistical manifolds~\cite{StatManifold}, to jointly capture asymmetric and infinite distances in a low-dimensional space.
We determine the connection between geometry and gradient-based optimization.  
(ii) Furthermore, we develop a sampling-based scalable learning algorithm for large-scale networks. 
(iii) Extensive experiments demonstrate that the proposed embedding outperforms several directed network embedding baselines, i.e.\ random walks based~\cite{APP2017} and matrix factorization based~\cite{HOPE2016} methods, as well as the deep learning undirected representative~\cite{DeepWalk} and deep Gaussian embeddings for attributed graphs~\cite{bojchevski2017deep}.


\section{Statistical manifold divergence embedding for directed graphs}
In this section, we analyze the limits of metric embedding.  Then, we continue by providing an intuition of our embedding with a synthetic example before presenting the learning procedure of the embedding method. 
Bourgain theorem (1985) \cite{Bourgain1985} states that it is possible to construct an $O(\log(n))$-dimensional embedding ($\phi\colon v \mapsto x_v$) of an undirected graph with $n$ nodes to Euclidean space with $O(\log(n))$ distortion by using random subset projection.
But, on directed graph $G=(V,E)$, the directed shortest paths are not necessarily symmetric $d_{v,u}\neq d_{u,v}$. This does not imply that it is not possible to provide low dimensional metric embedding~\cite{Bourgain1985, Linial1995, Johnson1986, Frankl1988} with a variant of Bourgain theorem. In the appendix \ref{app:limits}, we show that the main problem is not asymmetry, but the existence of infinite directed distances, which is quite common in real directed networks, see Table 1. 



\subsection{Learning statistical manifold embedding}

Instead of a conventional network embedding in a metric space, we propose to use the Kullback-Leibler divergence defined on probability distributions to capture asymmetric finite and infinite distances in directed graphs.
Each node is mapped to a corresponding distribution described by some parameters, which we learn during training. 
In this paper, we choose the class of exponential power distributions due to analytical and computational merits.
With minor modifications, our method can also be applied to other distribution classes. 

\begin{figure}
  \centering
  \includegraphics{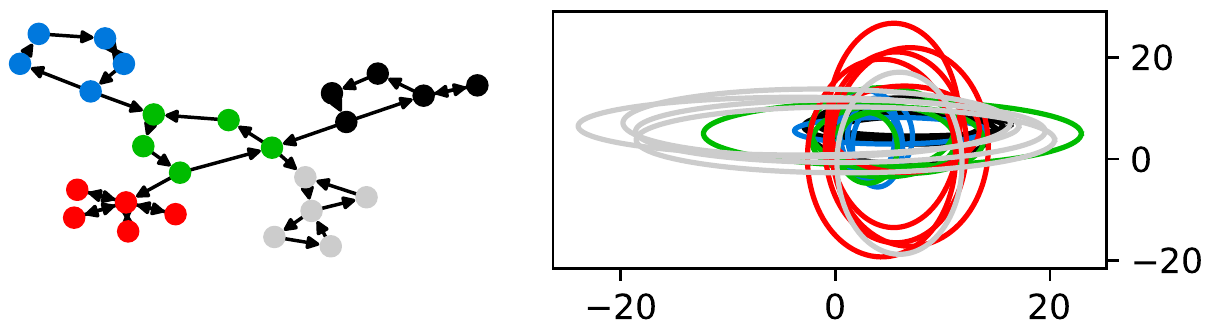}
  \caption{
  Visualization of synthetic example network together with our embedding. 
  The graph is embedded into the 2-variate normal distributions, which are represented by a $\sigma$ ellipse (boundary of 1 standard deviation around mean $\boldsymbol{\mu}$).
 The $\sigma$ ellipses of the green nodes are contained in one of the greys nodes, which represents that the distance (measured with divergence in embedded space) between green and grey is small, but in the opposite direction very large.   
  }
  \label{img:toy_example}
\end{figure}

\textbf{Intuition.}
In Fig.\ \ref{img:toy_example}, we use a toy example to qualitatively illustrate that our embedding can represent infinite distances without requiring a high dimensional embedding space.
More precisely, the network consists of five groups (color-coded), with five nodes each.
The two top blocks have a connection only to the center group and, similarly, the two bottom groups have only a link from the central group to them.
Our model learns this connectivity pattern by embedding members of these groups in a similar fashion.
In this example, each node $u$ is embedded as a 2-variate normal distribution with mean $\boldsymbol{\mu}_u=(\mu_u^1, \mu_u^2)$ and variances $\boldsymbol{\Sigma}_u = \operatorname{diag}(\sigma_u^1, \sigma_u^2)$, which can be visualized by the $\sigma$-ellipse, the curve which represent one-$\sigma$ distance from their respective means.
The nodes of graphs are embedded by minimizing the difference between the pair-wise graph distances and corresponding Kullback-Leibler divergences between embeddings.
For two distribution $p(x),q(x)$, if $p(x)\gg q(x)$ on an open subset $U$, the Kullback-Leibler divergence $\operatorname{KL}(p(x), q(x))$ is then relatively high.
In other words, if in Figure \ref{img:toy_example} the $\sigma$-ellipse of node $u$ is contained in or very similar to the $\sigma$-ellipse of node $v$, then the embedding represents $\operatorname{d}(u,v) < \infty$.
Using this, we see that the embedding retrieved by our optimization and visualized in Fig.\ \ref{img:toy_example} includes most of the observed infinite distances.

\textbf{Statistical manifold embedding.}
%
Our embedding space $X$ is the space of k-variate exponential power distributions \cite{Nadarajah2005} (also called generalized error distributions), described by the following probability density function for $\lambda > 0$
\begin{align}
\psi_\lambda(x|\boldsymbol{\Sigma}, \boldsymbol{\mu}) = \frac{\lambda\Gamma(\frac{k}{2})}{2^{1+\frac{k}{\lambda}}\pi^{\frac{k}{2}}\det(\boldsymbol{\Sigma})^{\frac{1}{2}}\Gamma(\frac{k}{\lambda})}\exp\left(-\frac{[(x-\boldsymbol{\mu})^T\boldsymbol{\Sigma}^{-1}(x-\boldsymbol{\mu})]^{\frac{\lambda}{2}}}{2}\right)
\label{eq:generailzed_pdf}
\end{align}
with mean vector $\boldsymbol{\mu} \in \mathbb{R}^k$ and covariance matrix $\boldsymbol{\Sigma} \in \mathbb{R}^{k\times k}$.
$\Gamma(.)$ denotes the Gamma function and $x^T$ denotes the transposed vector of $x$.
Note that $\lambda=2$ results in the multivariate Gaussian distribution and $\lambda=1$ yields the multivariate Laplacian distribution.
As we are interested in non-degenerate distributions, we enforce positive definite co-variance matrices and further restrict ourselves to diagonal matrices, i.e.\ $\boldsymbol{\Sigma}_u=\operatorname{diag}(\sigma_u^1, \dots, \sigma_u^k)$ with $\sigma_u^i\in \mathbb{R}_{+}$.
With the latter, we reduce the degrees of freedom for each node to $2k$ and simplify our optimization by replacing a positive definite constraint on $\boldsymbol{\Sigma}_u$ with the constraints $\sigma_u^i>0$.
A common asymmetric function operating on continuous random variables is the Kullback-Leibler divergence.
The asymmetric distance between nodes $u$ and $v$, denoted by $\operatorname{KL}_{u,v}$, is 
\[
\operatorname{KL}_{u,v} = \operatorname{KL}(p_u^\lambda, p_v^\lambda) 
= \int p_u^\lambda \log \frac{p_u^\lambda}{p_v^\lambda} \operatorname{dx}
\]
with $p_u^\lambda = \psi_\lambda(x|\boldsymbol{\Sigma}_u, \boldsymbol{\mu}_u)$ and $p_v^\lambda = \psi_\lambda(x|\boldsymbol{\Sigma}_v, \boldsymbol{\mu}_v)$.
We approximate it with importance sampling Monte Carlo estimation.
In particular, $\operatorname{KL}_{u,v}$ can be expressed as the expectation (for $\lambda_*=2\le \lambda$)
\[
\operatorname{KL}_{u,v}
= \mathbb{E}_{x \sim p_u^{\lambda_*}} \left[\frac{p_u^\lambda}{p_u^{\lambda_*}}\log\frac{p_u^\lambda}{p_v^\lambda}\right],
\]
where $p_u^{\lambda_*} =\psi_{\lambda_*}(x|\boldsymbol{\Sigma}_u, \boldsymbol{\mu}_u)$ is easy to sample and a proposal function for $p_{\lambda}(x|\boldsymbol{\Sigma}_u, \boldsymbol{\mu}_u)$.
If a closed expression for $\operatorname{KL}_{u,v}$ is known, its evaluation replaces the importance sampling Monte Carlo estimation, like in the special case of $\lambda=2$. See Appendix \ref{app:importanceSampling} for more details. 

Now, in order to learn these statistical manifold embeddings, we can define the loss function over 
asymmetric distances $D = (\operatorname{d}_{u,v})_{u,v\in V}$ of a directed graph and $\{ (\boldsymbol{\mu}_u, \boldsymbol{\Sigma}_u) \}_u$ as:
\begin{align}
\label{eq:objective}
    \mathcal{L}( \{ (\boldsymbol{\mu}_u, \boldsymbol{\Sigma}_u) \}_u ) = \sum_{u\ne v}||(1+\tau \operatorname{KL}_{u,v})^{-1}-\operatorname{d}_{u,v}^{-\beta}||_2^2,
\end{align}
where $\tau\in \mathbb{R}_{+}$ is a free (trainable) parameter and $\beta\in \mathbb{R}_{+}$ a fixed value.
This loss function given in the Eq.~(\ref{eq:objective}) is minimized during the learning process so that the $\operatorname{KL}_{u,v}$ based on learned node distribution representations captures the distances $\operatorname{d}_{u,v}$ in the directed graph.
Empirically, we transform the given distances into finite numbers with $\operatorname{d}_{u,v}^{-\beta}\in[0,1]$, for all $u \ne v$ and a $\beta\in \mathbb{R}_{+}$, which can be used to increase the differentiation between large $\operatorname{d}_{u,v}$ values and between finite and infinite distances.  
We modify the unbounded Kullback-Leibler divergence in a similar fashion $(1+\tau \operatorname{KL}_{u,v})^{-1} \in[0,1]$, where $\tau\in \mathbb{R}_{+}$ is a free (trainable) parameter and the additional $1$ ensures a value in the same interval $[0,1]$.
We start the optimization from random initial parameters $\{\boldsymbol{\mu}_u, \boldsymbol{\Sigma}_u\}_{u\in V}$, and iteratively minimize the loss function with stochastic gradient descent optimizers, such as Adam~\cite{kingma2014adam}.
For small $k$, which we will consider in the next section, more enhanced initialization strategies, such as using graph plotting algorithms to determine the means and exploiting memberships of strongly connected components for the covariance initialization, probably reduce the number of training epochs. 

\textbf{Scalable learning procedure.}
The full objective function Eq.~(\ref{eq:objective}) consists of $|V|(|V|-1)$ terms and only graphs with up to the magnitude of around $10^4$ nodes can be applied.
To extend our method beyond this limit, we propose an approximated solution, where the size of the training data scales linearly in the number of nodes and thus can be applied to large graphs.

This approximation solution is based on a decomposition of the loss function Eq.~(\ref{eq:objective}) into a neighborhood term and a singularity term as:
\begin{align*}
    \mathcal{L}( \{ (\boldsymbol{\mu}_u, \boldsymbol{\Sigma}_u) \}_u ) &=
    \underbrace{\smashoperator[r]{\sum_{u\ne v, \operatorname{d}_{u,v}<\infty}}||(1+\tau \operatorname{KL}_{u,v})^{-1}-\operatorname{d}_{u,v}^{-\beta}||_2^2}_{\text{neighborhood term}} + \underbrace{\smashoperator[r]{\sum_{u\ne v,  \operatorname{d}_{u,v}=\infty}}||(1+\tau \operatorname{KL}_{u,v})^{-1}-\operatorname{d}_{u,v}^{-\beta}||_2^2}_{\text{singularity term}}. 
\end{align*}

We efficiently sample from each of these sums for each node a small number of $B$ samples and optimize our model based on the sampled information about closeness and infinite distances. 

One straightforward approach to approximate the neighborhood term is to use all direct neighbors.
Yet, the number of samples available via the directed neighbors is limited by $|E|$, which usually corresponds to a small $B$ in sparse real-world examples. 
Therefore, we apply for each node a breadth-first-search into both directions until we retrieve $B$ new samples.
In this way, we obtain smaller $\operatorname{d}_{u,v}$ first, and the approximation tends to the original term in the limit $B\to |V|$. 

For the singularity term, we use the topological sorting~\cite{Kahn1962} of the strongly connected components.
In a topological sorting of a directed acyclic graph, all edges are from lower indices to higher indices (with respect to the topological sorting).

Two important restrictions are noteworthy.
First, the reverse statement does not hold, i.e.\ not all infinite distances are given via a single topological sorting~\cite{Kahn1962}.
As a consequence, we won't sample uniformly from all infinities, but from a subset of the infinities given by the topological sorting. 
Second, the construction of topological sorting is only possible for acyclic graphs, and most directed graphs have cycles.

Since we are only interested in sampling singularities and the graph defined by the strongly connected components of a directed graph is acyclic, we use Tarjan's algorithm~\cite{tarjan1972} to retrieve the strongly connected components and topological sorting of them.  
After this preprocessing, we can efficiently generate samples of infinite distance for the nodes. 

With these two approximations we can construct two sets $U_{\text{close}}$ and $U_{\infty}$ and the loss function reduces to
\begin{align}
    \Tilde{\mathcal{L}} &=
        \sum_{(u,v)\in U_{\text{close}}}||(1+\tau \operatorname{KL}_{u,v})^{-1}-\operatorname{d}_{u,v}^{-\beta}||_2^2 
    &\ + \sum_{(u,v)\in U_{\infty}}||(1+\tau \operatorname{KL}_{u,v})^{-1}-\operatorname{d}_{u,v}^{-\beta}||_2^2. \nonumber
\end{align}


\begin{lemma}\label{lemma:complexity}
Let $G=(V,E)$ be a graph. Then our embedding has $2k|V| + 1$ degrees of freedom and one hyperparameter ($\beta$).
Generating the training samples for the full method, is equivalent to the all pair shortest path problem, which can be solved within $O(|V|^2\log|V| + |V||E|)$ for sparse graphs and evaluating the loss function has time complexity $O(|V|^2)$.
The scalable variant has, for $B\ll|V|$, a time complexity of $O((B+1)|V|+|E|)$ and the loss function has $O(B|V|)$ terms. 
\end{lemma}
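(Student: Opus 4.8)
The plan is to verify each of the bundled claims in turn, since the lemma is really a collection of independent counting and complexity statements that share no machinery beyond the construction in Eq.~(\ref{eq:objective}).

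First I would count the degrees of freedom. By construction each node $u$ is mapped to the pair $(\boldsymbol{\mu}_u, \boldsymbol{\Sigma}_u)$, where $\boldsymbol{\mu}_u \in \mathbb{R}^k$ contributes $k$ free parameters and the diagonal covariance $\boldsymbol{\Sigma}_u = \operatorname{diag}(\sigma_u^1,\dots,\sigma_u^k)$ with $\sigma_u^i > 0$ contributes a further $k$. This gives $2k$ per node and $2k|V|$ over all nodes; adding the single trainable temperature $\tau$ yields $2k|V|+1$. The exponent $\beta$ enters only through the fixed transformation $\operatorname{d}_{u,v}^{-\beta}$ and is never updated during optimization, so it is correctly classified as the one hyperparameter rather than a degree of freedom.

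Next I would treat the full method. Every summand of $\mathcal{L}$ ranges over ordered pairs $u \ne v$ and depends on the distance $\operatorname{d}_{u,v}$; assembling all of these is by definition the all-pairs shortest path (APSP) matrix of $G$, which establishes the claimed equivalence. For sparse graphs the stated bound $O(|V|^2\log|V| + |V||E|)$ is attained by Johnson's algorithm (a reweighting pass followed by $|V|$ Dijkstra runs with Fibonacci heaps, each costing $O(|V|\log|V|+|E|)$), and for the unweighted distances used here a breadth-first search from each source already suffices. For the loss evaluation itself, the sum contains exactly $|V|(|V|-1)$ terms, and each term costs a constant amount of work once $k$ and the importance-sampling budget are fixed (in particular a closed form when $\lambda = 2$), so the total is $O(|V|^2)$.

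Finally I would analyze the scalable variant, which splits into preprocessing and sampling. Tarjan's algorithm returns the strongly connected components together with a topological ordering of the acyclic condensation in $O(|V|+|E|)$ time. For each node I then harvest $B$ finite-distance neighbours by a truncated bidirectional breadth-first search and $B$ infinite-distance partners using the topological ordering, producing $O(B)$ pairs per node; summing gives $|U_{\text{close}}| + |U_{\infty}| = O(B|V|)$, which is precisely the claimed number of loss terms. Charging the $O(B|V|)$ sampling work on top of the $O(|V|+|E|)$ preprocessing yields the overall $O((B+1)|V|+|E|)$ bound for $B \ll |V|$. The main obstacle is the bookkeeping in this last step: I must argue that stopping each search as soon as $B$ fresh samples appear keeps the per-node traversal within the amortized $O(B)$ budget plus the one-off $O(|E|)$ for preprocessing, and I must state explicitly that every $\operatorname{KL}_{u,v}$ evaluation is treated as constant cost under fixed dimension $k$ and a fixed Monte Carlo sample budget. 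The remaining claims reduce to routine parameter counting and the standard APSP complexity.
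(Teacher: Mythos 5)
Your proposal is correct and follows essentially the same route as the paper's own proof: the same parameter count ($2k$ per node plus the trainable $\tau$, with $\beta$ fixed), Johnson's algorithm for the all-pairs shortest path bound, and Tarjan's algorithm plus truncated breadth-first search for the scalable variant. Your explicit remarks about the constant-cost-per-term convention and the amortized budget of the truncated search are slightly more careful than the paper's write-up but do not change the argument.
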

See Appendix \ref{app:complexity} for proof using Johnson's algorithm~\cite{johnson1977efficient} and the pseudo-code of our methods.

\section{Statistical manifolds and directed graph geometry}
In this section, we show the properties of our representation space and the effects on our learning procedure.

\begin{theorem}
\label{theorem:properties}
Let $\lambda$ be an even number. 
Then the distributions with density of Eq. (\ref{eq:generailzed_pdf}) and parametrization $(\sigma^1, \dots \sigma^k, \mu^1, \dots, \mu^k) \mapsto \boldsymbol{X} \sim\psi_\lambda(x|\boldsymbol{\Sigma}=\operatorname{diag}(\sigma^1,\dots,\sigma^k), \boldsymbol{\mu}=(\mu^1, \dots, \mu^k))$ with $\sigma^i > 0$ are a statistical manifolds $\mathcal{S}$ with the following properties:
\begin{enumerate}
    \item For univariate exponential power distribution, the curvature is constant and equal to $-\nicefrac{1}{\lambda}$. 
    \item the Fisher information matrix, i.e.\ the Riemannian metric tensor, in this coordinate system at point $\psi_\lambda(x|\boldsymbol{\Sigma}, \boldsymbol{\mu})$ with $\boldsymbol{\Sigma}=\operatorname{diag}(\sigma^1,\dots,\sigma^k)$ and $ \boldsymbol{\mu}=(\mu^1, \dots, \mu^k)$ is given by
    \[
    (g_{ij})_{ij} = \operatorname{diag}\left(\frac{c_1}{(\sigma^1)^2}, \dots, \frac{c_1}{(\sigma^k)^2}, \frac{c_2}{(\sigma^1)^2}, \dots, \frac{c_2}{(\sigma^k)^2}\right),
    \]
    where
    \[
    c_1 = \frac{\Gamma(1-\frac{1}{\lambda})\lambda(\lambda - 1)}{\Gamma(\frac{1}{\lambda})}, \quad c_2 = \lambda.
    \]
    \item the Riemannian distance between $\psi_\lambda(x|\boldsymbol{\Sigma}, \boldsymbol{\mu})$ and $\psi_\lambda(x|\boldsymbol{\Tilde{\Sigma}}, \boldsymbol{\Tilde{\mu}})$ is
    \[
    \operatorname{d}_{F}( \psi_\lambda(x|\boldsymbol{\Sigma}, \boldsymbol{\mu}), \psi_\lambda(x|\boldsymbol{\Tilde{\Sigma}}, \boldsymbol{\Tilde{\mu}})) = 
    \sqrt {\lambda \sum\limits_{i = 1}^k {{{\textrm{arcosh}}^2}\left( {1 + \frac{{{{\left( {\frac{{{\mu ^i} - {{\widetilde \mu }^i}}}{{{c_3}}}} \right)}^2} + {{\left( {{\sigma ^i} - {{\widetilde \sigma }^i}} \right)}^2}}}{{2{\sigma ^i}{{\widetilde \sigma }^i}}}} \right)} } 
    \]
    with $c_3=\sqrt {\frac{{\Gamma \left( {\frac{1}{\lambda }} \right)}}{{\left( {\lambda  - 1} \right)\Gamma \left( {1 - \frac{1}{\lambda }} \right)}}}$.
\end{enumerate}
\end{theorem}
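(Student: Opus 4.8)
The plan is to reduce all three parts to a single two dimensional computation per coordinate and then identify the resulting geometry as a rescaling of the hyperbolic plane. The family is elliptically contoured: writing $\log\psi_\lambda = \mathrm{const} - \tfrac12\sum_i\log\sigma^i - \tfrac12 Q^{\lambda/2}$ with $Q=(x-\boldsymbol{\mu})^T\boldsymbol{\Sigma}^{-1}(x-\boldsymbol{\mu})$, the density depends on $x$ only through $Q$. First I would record the score functions $\partial_{\mu^i}\log\psi_\lambda$ and $\partial_{\sigma^i}\log\psi_\lambda$, using that $\lambda$ is even so that $|z|^\lambda=z^\lambda$ is a smooth polynomial; this legitimizes differentiation under the integral sign and guarantees all the moments appearing below are finite.

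The heart of part (2) is showing the Fisher matrix $g_{ij}=\mathbb{E}[\partial_i\log\psi\,\partial_j\log\psi]$ is diagonal. Here I would exploit the symmetry of the contours. Under $y^i=(x^i-\mu^i)/\sqrt{\sigma^i}$ the law of $y$ is spherically symmetric with radial density $\propto r^{k-1}e^{-r^\lambda/2}$. Each $\mu$ score is odd in a single $y^i$ while each $\sigma$ score is even in every coordinate, so every $\mu^i\mu^j$ ($i\ne j$) entry and every $\mu^i\sigma^j$ entry vanishes by the reflection $y^i\mapsto -y^i$; the remaining $\sigma^i\sigma^j$ ($i\ne j$) entries vanish after reducing to radial integrals and applying the spherical moment identities $\mathbb{E}[(y^i)^2 f(r)]=\tfrac1k\mathbb{E}[r^2 f(r)]$ and $\mathbb{E}[(y^i)^2(y^j)^2 f(r)]=\tfrac1{k(k+2)}\mathbb{E}[r^4 f(r)]$, which produce the needed cancellation. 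The diagonal entries then collapse to one dimensional radial moments: $g_{\mu^i\mu^i}$ to an $\mathbb{E}[r^{2\lambda-2}]$ integral and $g_{\sigma^i\sigma^i}$ to a combination of $\mathbb{E}[|y|^\lambda]$ and $\mathbb{E}[|y|^{2\lambda}]$. The substitution $t=r^\lambda/2$ turns each into a Gamma integral, and the identity $\Gamma(1+s)=s\Gamma(s)$ collapses them to the stated constants, with the factor $1/(\sigma^i)^2$ falling out of the scaling $z^i=\sqrt{\sigma^i}\,y^i$.

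For part (1) I would set $k=1$, where the metric is $ds^2=\tfrac{a}{\sigma^2}d\sigma^2+\tfrac{b}{\sigma^2}d\mu^2$ on the half plane $\sigma>0$, with $a,b$ the two constants from part (2). Since both coefficients depend on $\sigma$ only, the standard Gaussian curvature formula for a diagonal (orthogonal) metric kills the $\mu$ derivative term and yields $K=-1/a=-\nicefrac{1}{\lambda}$, independent of $b$. For part (3) I would rescale $\mu\mapsto\mu/c_3$ with $c_3=\sqrt{a/b}$ to bring each two dimensional block into the form $a\cdot\big[\tfrac1{\sigma^2}(d\sigma^2+d\tilde\mu^2)\big]$, i.e. $a=\lambda$ times the standard $\mathbb{H}^2$ metric, whose geodesic distance is the classical $\operatorname{arcosh}\!\big(1+\tfrac{(\Delta\tilde\mu)^2+(\Delta\sigma)^2}{2\sigma\tilde\sigma}\big)$. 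Because the full metric is an orthogonal product metric whose blocks involve disjoint coordinate pairs, length minimizing curves split across the factors, so the global Riemannian distance is the $\ell^2$ combination of the per coordinate hyperbolic distances, giving $\sqrt{\lambda\sum_i\operatorname{arcosh}^2(\cdots)}$ with the stated $c_3$.

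The main obstacle I expect is the off diagonal vanishing together with the exact evaluation of the diagonal radial moments: for $\lambda\ne2$ the density does not factor across coordinates, so the block diagonality of $g$ genuinely rests on the rotational symmetry argument rather than on independence, and pinning down $c_1,c_2$ (hence $c_3$) demands careful Gamma function bookkeeping, including tracking the variance versus scale convention and the factor $\tfrac12$ in the exponent, which is exactly where constant and sign slips are most likely. A secondary point needing care is justifying the product decomposition of the geodesic distance in part (3), namely that minimizing arc length really separates over the $k$ orthogonal hyperbolic factors.
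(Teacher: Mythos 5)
Your overall strategy --- compute the Fisher metric in the $(\sigma,\mu)$ coordinates, rescale $\mu$ by $c_3$ to identify each $(\mu^i,\sigma^i)$ block with $\lambda$ times the standard hyperbolic half-plane, and read off the curvature and the product ($\ell^2$-combined) geodesic distance --- is the same as the paper's. Your direct curvature computation for a diagonal metric whose coefficients depend only on $\sigma$ is a clean, self-contained substitute for the paper's citation of an external derivation, and your explicit attention to the multivariate case addresses exactly the point the paper's appendix skips (it computes only $k=1$ and extends to $k\ge 2$ with the word ``similarly'').

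However, the step you yourself flag as the crux fails: the off-diagonal $\sigma^i\sigma^j$ entries of the Fisher matrix do \emph{not} cancel for $\lambda\ne 2$ when $k\ge 2$. With your normalization, the score is $\partial_{\sigma^j}\log\psi_\lambda=\tfrac{1}{2\sigma^j}\bigl(\tfrac{\lambda}{2}Q^{\lambda/2-1}u_j-1\bigr)$ where $u_j=(x^j-\mu^j)^2/\sigma^j$ and $Q=\sum_i u_i=r^2$. Your own spherical moment identities, together with $\E[r^m]=2^{m/\lambda}\Gamma\bigl(\tfrac{m+k}{\lambda}\bigr)/\Gamma\bigl(\tfrac{k}{\lambda}\bigr)$, give $\tfrac{\lambda}{2}\E[Q^{\lambda/2-1}u_j]=1$ but $\tfrac{\lambda^2}{4}\E[Q^{\lambda-2}u_iu_j]=\tfrac{k+\lambda}{k+2}$ for $i\ne j$, hence $g_{\sigma^i\sigma^j}=\tfrac{1}{4\sigma^i\sigma^j}\bigl(\tfrac{k+\lambda}{k+2}-1\bigr)=\tfrac{\lambda-2}{4(k+2)\,\sigma^i\sigma^j}$, which vanishes only for $\lambda=2$. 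The same moments show the diagonal $\sigma$ entries acquire a $k$-dependence ($\tfrac{2k+3\lambda-2}{k+2}$ in place of $\lambda$ after converting to the scale parameterization), so they too fail to match $c_2$ for $k\ge2$, $\lambda\ne2$. The root cause is the one you correctly identify: for $\lambda\ne2$ the elliptically contoured density of Eq.~(\ref{eq:generailzed_pdf}) does not factor over coordinates, and rotational symmetry alone does not decouple the scale parameters --- the claimed block-diagonal metric holds for the product-of-univariate-densities family, not for the quadratic form raised to the power $\lambda/2$. Consequently your proof of parts 2 and 3 is only valid for $k=1$ or $\lambda=2$ (the cases the paper actually uses); the $k=1$ computations, the curvature argument, and the hyperbolic-product distance formula are otherwise sound.
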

See the Appendix \ref{app:theorem1} for more details and proofs.

The first observation of Theorem \ref{theorem:properties} is that the geometry of the statistical manifold has constant negative curvature (see \cite{Amari1985, Skovgaard1984, yuan2019geometric} for detailed derivation).
Negative curvature also comes as a natural model for power-law degree distributions in complex networks~\cite{Kriukov, HyberbolicPhysRevE}.
In Fig.\ \ref{img:hyperboloid} of Appendix \ref{app:issomerty} we exemplify the negative curvature of our embedding for the graph representations of Fig.\ \ref{img:toy_example} by making the isometric mapping (preserving distances) to the hyperbolic space. See the Appendix \ref{app:issomerty} for the derivation of the used isometry.
Since our representation space is not flat, the Riemannian metric is different from the Euclidean.
Therefore, we need to adjust the Euclidean gradient $\nabla L$ of one of our objective functions $L\in \{\mathcal{L}, \Tilde{\mathcal{L}}\}$ with respect to the metric tensor~\cite{Amari1998}. 
The steepest descent direction is given by $\Tilde{\nabla} L = G^{-1}\nabla L$, where $G^{-1}$ is the inverse of the metric tensor $G$ of $\mathcal{S}^n$ evaluated at the specific point.
In other words, starting from the same representations and performing one step into the direction of $\Tilde{\nabla}L$ will always result in lower values of the loss function $L$ than going into the direction of $\nabla L$.
In our experiments, we report the results using $\Tilde{\nabla}L$. 
Further discussions and details can be found in Appendix \ref{app:correction}. 

With the last property of our embedding space, the non-Euclidean Riemannian distance, is important for embedding based applications, like clustering. 
Note, that the value of $c_3$ is approximately close to the value 1.0 for different values of $\lambda$. It implies that after the embedding is done with one family of distributions, we can see that the pair-wise Riemannian distance of embedded points for different representations ($\lambda$) is scaling only by the multiplicative factor $\sqrt{\lambda}$ (see Appendix \ref{app:theorem1} for more details). 
Also note another useful connection, the Hessian of KL divergence is the Fisher information metric $g_{i,j}$. 
When the Fisher distance is small, we have 
$\operatorname{KL}\left( p_u^\lambda\parallel p_v^\lambda \right) \approx \frac{1}{2}{\operatorname{d}_F}{\left( p_v^\lambda,p_u^\lambda \right)^2} \leqslant {\operatorname{d}_F}\left( p_v^\lambda,p_u^\lambda \right)$. 
Furthermore, the Fisher information metric is unique (up to scaling) and is the only Riemannian metric on the statistical manifolds~\cite{Cencos,InfoGeometr}. Due to this fact and its connection with KL divergence, this is the reason why we have not used other divergence functions.

\section{Experiments}
\subsection{Datasets}
From the Koblenz Network Collection~\cite{kunegis2013konect} we retrieved three datasets of different sizes and connectivity. See Table \ref{tab:graph_properties} for an overview.

\textbf{Political blogs.}
The small dataset is compiled during the 2004 US election~\cite{adamic2005political}.
In addition, we evaluate two larger networks with different proportions of reachability between their nodes.

\textbf{Cora.}~\cite{cora} consists of citations between computer science publications and was used as an example in baseline APP~\cite{APP2017} and HOPE~\cite{HOPE2016} as well.

\textbf{Publication network.}
With a higher density but lower reciprocity, our largest example is the {publication network} given by arXiv's High Energy Physics Theory (hep-th) section~\cite{leskovec2007graph}.
\begin{table}
  \caption{\small{Properties of datasets}}
  \label{tab:graph_properties}
  \centering
  \begin{tabular}{lrrcr}
    \toprule
    Name     & $|V|$     & $|E|$  & $|\{d_{u,v}:d_{u,v}=\infty\}|/|V|^2$ & Reciprocity \\
    \midrule 
    Synthetic example & 25  & 30 & 0.48 & 34.3\% \\
    Political blogs & 1224 & 19,025 & 0.34 & 24.3\% \\
    Cora & 23,166 & 91,500 & 0.83 & 5.1\% \\
    arXiv hep-th & 27,770 & 352,807 & 0.71 & 0.3\%\\ 
    \bottomrule
  \end{tabular}
\end{table}
\subsection{Baselines}
\textbf{APP} is the asymmetric proximity preserving graph embedding method~\cite{APP2017} based on the skip-gram model, which is used by many other methods like Node2Vec and DeepWalk.
In contrast to the symmetric counterparts, APP explicitly splits the representation into a source vector and a target vector, which are updated in a direction-aware manner during the training with random walks with reset. 
Their method implicitly preserves the rooted PageRank score for any two vertices.

\textbf{HOPE} stands for High-Order Proximity preserved Embedding~\cite{HOPE2016}.
This method uses a generalization of the singular value decomposition to efficiently retrieve low-rank approximation of proximity measures like Katz Index or rooted PageRank.

\textbf{DeepWalk} is the deep learning representative of the undirected graph embedding methods.
It does not differentiate between source and target and retrieves a single representation $s_u \in \mathbb{R}^K$ for each node.
Like APP, DeepWalk uses the skip-gram model, trains the representation with random walks, and evaluates by cosine similarity between two node representations.

\textbf{Graph2Gauss} trains a neural network to output mean and variance of Gaussians in a way such that a ranking loss is minimized~\cite{bojchevski2018deep}.
Their loss function concentrates on preserving the 1-hop and 2-hop neighborhood, but not the global graph structure ($\operatorname{d}_{u,v}>2$). 
The method was designed for graphs with additional node feature data, and in the absence of such information, one-hot vector encoding should be used instead. 

\subsection{Set-up}
In this paper, we focus on directed network embedding, which preserves global properties of directed graphs.
Considering this, our emphasis is different from the conventional evaluation tasks of network embedding,
e.g. the link prediction task only evaluates the differentiation between $\operatorname{d}(u,v)=1$ and $\operatorname{d}(u,v)>1$, which respectively correspond to the case of link existence and not.

\textbf{Evaluation metrics.}
Three evaluation metrics are used on the pairs of inverse true distances and approximated values derived by learned embedding from baselines and our methods.
We use the \textit{Pearson correlation coefficient} $\rho$ for checking a linear dependency~\cite{shalev2014understanding} and \textit{Spearman's rank correlation coefficient} $r$ for evaluating the monotonic relationship.
In addition to these statistical measures, we propose to use an information-theoretic non-linear evaluation metric, i.e.\ \textit{mutual information} (MI).
We choose the non-parametric k-nearest-neighbor based MI estimator, LNC~\citep{gao2015efficient}. 
It is recently developed to overcome local non-uniformity and can capture relationships with limited data.
We briefly describe the LNC MI here as: \linebreak
$\hat{I} = \frac{1}{N}\sum_{n=1}^{} \log \frac{\hat{p}(x, y)}{\hat{p}(x) \cdot \hat{p}(y)} - \frac{1}{N}\sum_{n=1}\log\frac{\bar{V}(n)}{V(n)}$
, where $N$ is the total number of data instances, the second term is a correction term handling non-uniform region $V(n)$ surrounding point $n$, and $\hat{p}(.)$ represents the kNN density estimator. 

\textbf{Hyper-parameters.}
With the nonlinear interactions in our embedding, our method needs only a small number of dimensions, and for the presented results we used an embedding into a 2-variate normal distribution, i.e.\ $k=2$ and the number of free parameters for each node is $4$. 
The initial means $\boldsymbol{\mu}_u$ and co-variances $\boldsymbol{\Sigma}_u$ are randomly initialized. 

For the random walk based methods APP and DeepWalk, we unified both default settings with 20 random walks for each node and length 100. 
The embedding dimension was set to $K=4$, where we allowed APP to use two $4$ dimensional vectors.
In the same fashion, we set the embedding dimension for HOPE to $4$, resulting in two $|V|\times 4$ matrices.
All other parameters we left at the default value.
For our proposed embedding, we evaluate both the exact and approximate version.
For the approximate one, we report the results based on $B=10$ and $B=100$ samples for each node.

We executed the optimization with $\beta \in \left\{\frac{1}{4}, \frac{1}{3}, \frac{1}{2}, 1 \right\}$ and consistently retrieved the best results for $\beta=\frac{1}{2}$.
For our method, we selected from the runs the point with the highest Pearson correlation between $\operatorname{d}_{u,v}^{-1}$ and $(1+\tau \operatorname{KL}_{u,v})^{-1}$, which usually coincided with the minimal observed objective function value, like in Figure \ref{img:toy_example}.
In the experiments, we applied Adam optimizer with learning rates in $\{.001, .01, .1\}$ and retrieved the reported results with the $.1$ for political blogs and $.001$ for the others.

HOPE was executed with GNU Octave version 4.4.1, and the other methods were executed in Python 3.6.7 and Tensorflow on a server with 258 GB RAM and a NVIDIA Titan Xp GPU. 

\subsection{Results}
First, we compared the performance of the embedding using different $\lambda$, i.e.\ different classes of distributions. 
From the results in Appendix Table \ref{tab:generalized_results}, we observe no significant performance improvement.
So we fixed $\lambda=2$ for the other experiments to have the performance gains of the analytical known KL divergence. 

Table \ref{tab:results} shows the results of evaluation metrics on different datasets. 
$\text{KL}(\cdot)$ and $\text{KL}(\text{full})$ refer to the approximate and exact version of our proposed embedding. 
We report the mean and standard deviation of MI by $40$ bootstrap samples. 
For all metrics, the higher the value, the better the performance.

The results support this intuition that our method retrieves the highest values in all cases and significantly outperforms baselines.
In addition, our approximated variant using only 10 neighbors and 10 infinities for each node already achieves higher values than baselines.
Increasing the number of samples to 100, further enhances the performance of the approximate version, which even embeds the large network in a similar quality as our full method.
A visualization of the results is given in Figure \ref{img:complete_overview}.
Further details are in the Appendix \ref{app:experiments_details}.
Additional experiments such as dimensionality experiment, graph reconstruction, and evaluation on undirected graph can be found in Appendix~\ref{sec:further_experiments}.

\begin{figure}
  \centering
  \includegraphics{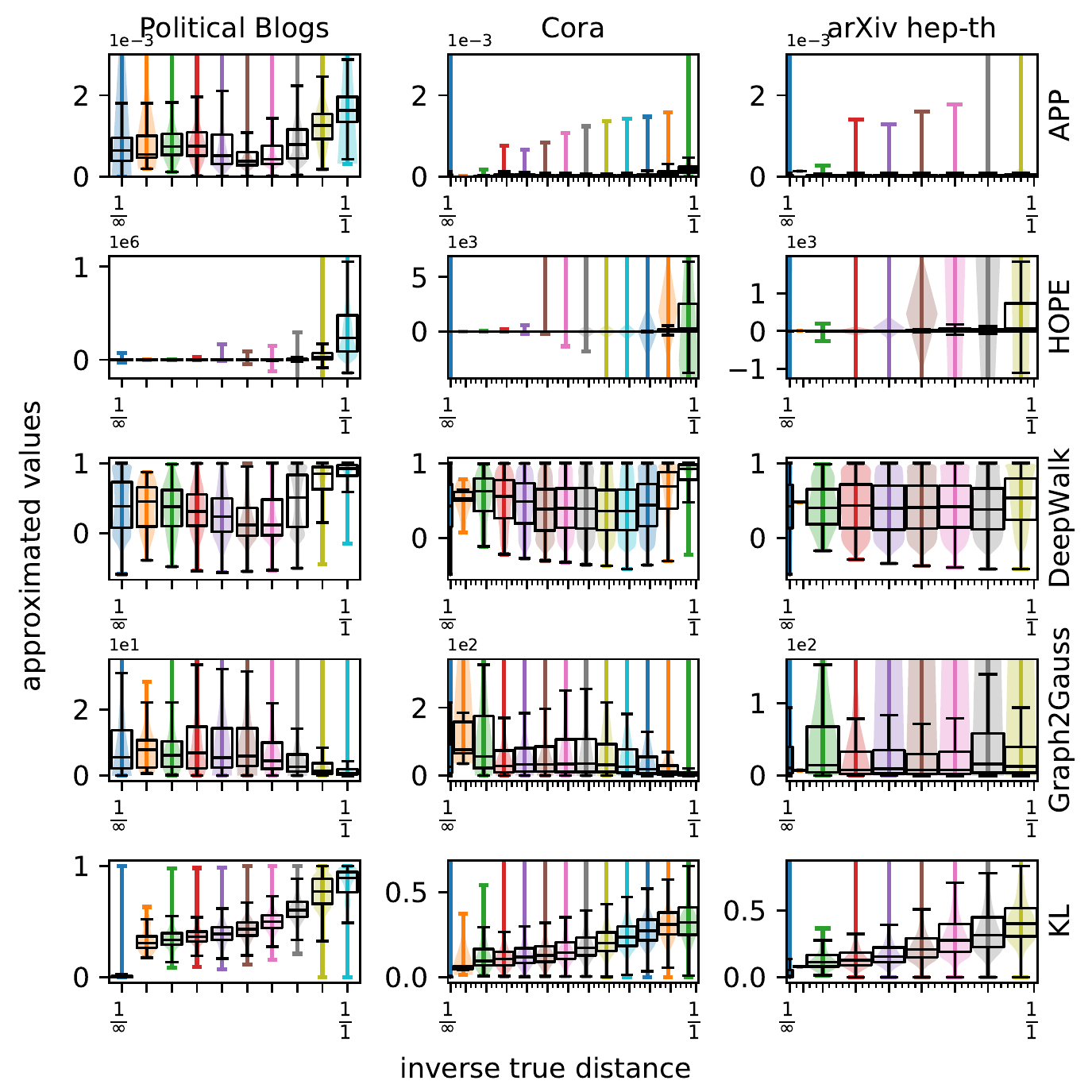}
  \caption{Visualization of approximated values w.r.t. inverse true distance $(\operatorname{d}_{u,v}^{-1})_{u,v}$ with boxplots, representing the means, first and third quartiles as well as $1.5$ interquartile range. Violin plots indicate the kernel densities. 
  Columns respectively correspond to dataset political blogs (first column), Cora (second column), and arXiv hep-th (last column).
  Rows from top to bottom represent APP, HOPE, DeepWalk, Graph2Gauss, and our KL method.
  Ideally, as the ground-truth value increases, approximated values are expected to follow this trend as well.   
The baseline methods show only a weak separation between the closest distances and all other.
For \textbf{our embedding method} (last row) a monotonic increasing relation is visible, which 
was reflected in the highest mutual information, Pearson and Spearman's correlation coefficient in Table \ref{tab:results}.
  }
  \label{img:complete_overview}
\end{figure}

\begin{table}[htbp]
  \caption{Results including Pearson correlation coefficient $\rho$, Spearman's rank correlation coefficient $r$, and mutual information (MI). 
  The p-value for $\rho$ and $r$ are in all cases below $10^{-8}$.
  For our method, we include the results with 10 samples, 100 samples for each node, and using the full distance matrix. 
  }
  \label{tab:results}
  \centering
  \small
 \begin{tabular}{@{}lllllllll@{}}\toprule
      Network& & APP & HOPE & DeepWalk & Graph2Gauss & KL (10) & KL (100) & KL (full) \\\midrule
      Political  &  $\rho$ & $.16$ & $.45$ & $.25$ & $-.17$ & $.73$ & $.77$ & $\mathbf{.88}$ \\
      Blogs & $r$ & $.29$ & $.45$ & $.24$ & $-.33$ & $.71$ & $.72$ & $\mathbf{.89}$ \\
      & avg. MI & $.15$ & $.65$ &  $.12$ & $.09$ & $.47$ & $.59$ & $\mathbf{.85}$ \\
      & std. MI & $.006$ & $.007$ &  $.005$ & $.005$ & $.005$ & $.005$ & $.006$ \\
      \bottomrule
      Cora & $\rho$ & $.10$ & $.17$ & $.07$ & $-.004$ & $.53$ & $.66$ & $\mathbf{.77}$\\
       & $r$ & $.01$ & $.41$ & $.02$ & $.013$ & $.56$ & $.62$ & $\mathbf{.65}$ \\
      & avg. MI  & $.018$ & $.13$ & $.013$ & $.01$ & $.23$ & $.35$ & $\mathbf{.43}$\\
      & std. MI & $002$ & $.005$ & $.004$ & $.003$ & $.005$ & $.006$ & $.007$\\
      \bottomrule 
      arXiv  & $\rho$ & $.01$ & $.20$ & $.08$ & $-.05$ & $.53$ & $.62$ & $\mathbf{.68}$\\
        hep-th & $r$ & $.12$ & $.28$ & $.04$ & $-.06$ &$.59$ & $.68$ & $\mathbf{.72}$ \\
        & avg. MI & $.033$ & $.15$ & $.014$ & $.01$ & $.25$ & $\mathbf{.37}$ & $.36$\\
        & std. MI & $.003$ & $.004$ & $.003$ & $.003$ & $.005$ & $.005$ & $.005$\\
      \bottomrule
      \end{tabular}      
\end{table}

\section{Conclusion and discussion}
Although the techniques for graph embedding are quite mature~\cite{hamilton2017representation,MFembeddings,NeuralHyperbolic, DeepWalk, node2vec}, there still exist obstacles for using them for directed graphs. Obstacle arises from the asymmetric property of graph geodesics and a large ratio of pairs with infinite distances.
Motivated by this, we propose a mapping of nodes to the elements of the statistical manifolds~\cite{StatManifold} by minimizing the divergence function between embedded points a graph geodesics. This allows a single representation that allows elegant geometrical encoding of infinite and finite distances in low-dimensional statistical manifolds by using the divergence function. One can encode an arbitrary number of infinities into the low-dimensional space, as this embedding does not need to push point on infinite geodesic distance on a manifold.
Furthermore, this embedding allows the use of analytical tools from statistics and differential geometry, e.g. connection of curvature and natural gradient~\cite{Amari1998, efron2018, Skovgaard1984, Burbea1982}. In contrast to the previous work, we have characterized the geometrical structure of the underlying space to which nodes are being embedded into, and its connection to the learning via curvature corrected gradient. Furthermore, we have proposed an efficient divergence estimation method via importance sampling method. 
Theoretical understanding of the geometrical structures for directed graph embeddings opened many interesting theoretical and practical questions. This is the reason why we had to restrict the scope of this work only to unsupervised setting and leave link prediction and node classification task for future work.

\subsubsection*{Acknowledgments}
The work of N.A.F. has been partially supported by the European Community’s H2020 Program under the scheme ‘INFRAIA-1-2014-2015: Research Infrastructures’, grant agreement 654024 ‘SoBigData: Social Mining and Big Data Ecosystem’ and the scheme 'INFRAIA-01-2018-2019: Research and Innovation action', grant agreement 871042 'SoBigData++: European Integrated Infrastructure for Social Mining and Big Data Analytics'.


\small
\bibliographystyle{acm}
\bibliography{nips_refs}

\clearpage
\appendix
\section{Appendix}

\subsection{Limits of low-dimensional metric representations}
\label{app:limits}

\textbf{Definition 1.} We define the following metric function $\Tilde{\operatorname{d}}_{u,v}=\min(\operatorname{d}_{u,v}, \operatorname{d}_{v,u})$.

\textbf{Definition 2.} For $\alpha>1$, we say that the directed graph $G=(V,E)$ is $\alpha$-asymmetric if $\forall u,v:$ $\operatorname{d}_{u,v} \leq \alpha \Tilde{\operatorname{d}}_{u,v}$ (control of maximal asymmetry).

The following Lemma can be viewed as the corollary of Bourgain theorem.
It implies that distortion w.r.t. directed graph is is not finite when $\alpha=\infty$. Note that the $\infty$-asymmetry is quite common in real directed networks, see Table 1.

\begin{lemma}\label{lemma:asymmetric}
If a graph is $\alpha$-asymmetric, then there exists a metric representations that uses random subsets embedding ($\phi\colon v \mapsto x_v$) in $m=\ceil{\log(n)}$ dimensional space,
such that for every pair $(u,v)$ the following bounds: 
$|| x_u - x_v ||_1 / \operatorname{d}_{u,v} \leq m$ and $\mathbb{E}[|| x_u - x_v ||_1] / \operatorname{d}_{u,v} \geq \frac{1}{16\alpha}$ hold, where $||.||_1$ denotes the L1 norm and $\mathbb{E}[.]$ is the expectation operator over random subsets. 
\end{lemma}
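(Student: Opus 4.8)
The plan is to treat the symmetrised quantity $\tilde{\operatorname{d}}_{u,v} = \min(\operatorname{d}_{u,v}, \operatorname{d}_{v,u})$ of Definition 1 as the relevant symmetric distance, to invoke the random-subset embedding underlying Bourgain's theorem for it, and then to transfer the resulting bounds to the asymmetric $\operatorname{d}_{u,v}$ using only the $\alpha$-asymmetry hypothesis. The first thing I would record is the two-sided sandwich $\tfrac{1}{\alpha}\operatorname{d}_{u,v} \le \tilde{\operatorname{d}}_{u,v} \le \operatorname{d}_{u,v}$, valid for every ordered pair: the right-hand inequality is immediate from $\tilde{\operatorname{d}}_{u,v} = \min(\operatorname{d}_{u,v},\operatorname{d}_{v,u})$, and the left-hand inequality is exactly the defining condition of $\alpha$-asymmetry, $\operatorname{d}_{u,v} \le \alpha\,\tilde{\operatorname{d}}_{u,v}$, rearranged. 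This sandwich is the only place the parameter $\alpha$ enters, and it already makes transparent why the distortion degrades as $\alpha\to\infty$.

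Next I would set up the embedding. For each scale $t = 1,\dots,m$ with $m = \ceil{\log n}$, draw a random subset $A_t \subseteq V$ by including each node independently with probability $2^{-t}$, and let the $t$-th coordinate of $x_v$ be $\tilde{\operatorname{d}}(v, A_t) = \min_{a\in A_t} \tilde{\operatorname{d}}_{v,a}$. Restricting to $\ceil{\log n}$ scales is forced by the fact that any ball contains at most $n$ nodes, so inclusion densities below $2^{-\ceil{\log n}}$ contribute nothing useful; this is precisely what pins the ambient dimension to $m$. For the upper (contraction) bound I would use that each coordinate map $v \mapsto \tilde{\operatorname{d}}(v,A_t)$ is $1$-Lipschitz with respect to $\tilde{\operatorname{d}}$, a direct consequence of the triangle inequality, giving $|\tilde{\operatorname{d}}(u,A_t) - \tilde{\operatorname{d}}(v,A_t)| \le \tilde{\operatorname{d}}_{u,v}$. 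Summing over the $m$ coordinates yields $\|x_u - x_v\|_1 \le m\,\tilde{\operatorname{d}}_{u,v} \le m\,\operatorname{d}_{u,v}$, which is exactly the claimed bound $\|x_u - x_v\|_1/\operatorname{d}_{u,v} \le m$.

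The heart of the argument, and the step I expect to be the main obstacle, is the lower bound on the expectation, namely the genuine Bourgain estimate $\mathbb{E}\|x_u - x_v\|_1 \ge \tfrac{1}{16}\,\tilde{\operatorname{d}}_{u,v}$. Here I would run the standard shell/annulus argument: writing $\Delta = \tilde{\operatorname{d}}_{u,v}$, define $r_t$ as the smallest radius for which the balls around $u$ and around $v$ each contain at least $2^t$ nodes, cap the radii at $\Delta/4$, and show that at scale $t$ the event that $A_t$ meets the inner ball around one endpoint while missing the outer ball around the other has probability bounded below by an absolute constant. On that event the $t$-th coordinate already separates $u$ and $v$ by at least $r_t - r_{t-1}$; the two balls being disjoint, which the cap at $\Delta/4$ together with the triangle inequality guarantees, makes the two sub-events independent, and this is what lets the constant-probability estimate go through. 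Telescoping the increments over $t = 1,\dots,m$ recovers a constant multiple of $\Delta$, and careful bookkeeping of the hitting and missing probabilities produces the factor $\tfrac{1}{16}$.

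Finally I would combine this with the left half of the sandwich, $\tilde{\operatorname{d}}_{u,v} \ge \tfrac{1}{\alpha}\operatorname{d}_{u,v}$, to obtain $\mathbb{E}\|x_u - x_v\|_1/\operatorname{d}_{u,v} \ge \tfrac{1}{16\alpha}$, completing the proof. The delicate points I would watch are the precise probability estimates for hitting a ball of size at least $2^{t-1}$ and simultaneously missing one of size below $2^t$ under inclusion probability $2^{-t}$, and the constant-tracking that keeps the telescoped sum landing exactly at $\tfrac{1}{16}$; everything else is the routine reduction to the symmetric distance of Definition 1.
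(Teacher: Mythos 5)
Your proposal is correct and follows essentially the same route as the paper: reduce to the symmetrized metric $\tilde{\operatorname{d}}_{u,v}=\min(\operatorname{d}_{u,v},\operatorname{d}_{v,u})$, apply Bourgain's random-subset embedding with the standard $L^1$ bounds $\|x_u-x_v\|_1\le m\,\tilde{\operatorname{d}}_{u,v}$ and $\mathbb{E}\|x_u-x_v\|_1\ge\tilde{\operatorname{d}}_{u,v}/16$, and transfer to $\operatorname{d}_{u,v}$ via the $\alpha$-asymmetry sandwich. The only difference is that you sketch the shell argument behind the $1/16$ lower bound, whereas the paper simply cites it (and, like you, omits the verification that $\tilde{\operatorname{d}}$ satisfies the triangle inequality).
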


\begin{proof}
Bourgain theorem (1985) \cite{Bourgain1985} constructs an $O(\log(n))$-dimensional embedding ($\phi\colon v \mapsto x_v$) of an undirected graph with $n$ nodes to Euclidean space with $O(\log(n))$ distortion by using random subset projection. 
Distortion is defined as the ratio
$D=\frac{\max_{u,v\in V} ||x_u-x_v||_1 / d_{u,v}}{\min_{u,v\in V} ||x_u-x_v||_1 / d_{u,v}}$, where $d_{u,v}$ denotes the shortest path in a graph and $||x_u-x_v||_1$ the L1 distance between the embedded points. For directed graphs $\operatorname{d}_{u,v}$ is not symmetric and we propose to use the metric $\Tilde{\operatorname{d}}_{u,v}=\min(\operatorname{d}_{u,v}, \operatorname{d}_{v,u})$ function, which fulfills the properties of non-negativity, symmetry and triangle inequality (proof omitted).


It is possible to construct $m=\ceil{\log(n)}$ random subsets $A_i \subseteq V$, where each node from $V$ is put inside with the probability $1/2^i$.
The following embedding for node $j$ can be constructed as
$x_j=(\Tilde{\operatorname{d}}(j,A_1),..., \Tilde{\operatorname{d}}(j,A_m) )$, where $\Tilde{\operatorname{d}}(j,A_1)$ denotes the distance from node $j$ to set $A_1$. Then, the following L1 bounds~\cite{Lovasz99} for Bourgain theorem \cite{Bourgain1985} for every pair $(u,v)$ hold
\[
|| x_u - x_v ||_1 \leq m \Tilde{\operatorname{d}}_{u,v}
\]
and 
\[
\mathbb{E}[|| x_u - x_v ||_1] \geq \Tilde{\operatorname{d}}_{u,v}/ 16,
\]
where $||.||_1$ denotes the L1 norm and $\mathbb{E}[.]$ the expectation operator over random subset. 

Together with definition of $\Tilde{\operatorname{d}}$ and $\alpha$-asymmetry gives $|| x_u - x_v ||_1 / \operatorname{d}_{u,v} \leq m$ and $\mathbb{E}[|| x_u - x_v ||_1] / \operatorname{d}_{u,v} \geq \frac{1}{16\alpha}$. It implies that distortion w.r.t. directed graph is $D \geq \frac{\max_{u,v\in V} ||x_u-x_v||_1 / d_{u,v}}{\mathbb{E} ||x_u-x_v||_1 / d_{u,v}} \geq \frac{\max_{u,v\in V} ||x_u-x_v||_1 / d_{u,v}}{\frac{1}{16\alpha}}$ is not finite when $\alpha=\infty$.
\end{proof}

\subsection{Proof of Lemma \ref{lemma:complexity}}
\label{app:complexity}
\begin{lemma*}
Let $G=(V,E)$ be a graph. Then our embedding has $2k|V| + 1$ degrees of freedom and one hyperparameter ($\beta$).
Generating the training samples for the full method, is equivalent to the all pair shortest path problem, which can be solved within $O(|V|^2\log|V| + |V||E|)$ for sparse graphs and evaluating the loss function has time complexity $O(|V|^2)$.
The scalable variant has for $B\ll|V|$ a time complexity of $O((B+1)|V|+|E|)$ and the loss function has $O(B|V|)$ terms. 
\end{lemma*}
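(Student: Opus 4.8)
The plan is to verify each of the five assertions in turn, since they are essentially bookkeeping claims resting on standard graph algorithms. First I would count the degrees of freedom directly. By construction each node $u$ is mapped to a distribution $\psi_\lambda(x|\boldsymbol{\Sigma}_u, \boldsymbol{\mu}_u)$ whose mean $\boldsymbol{\mu}_u \in \mathbb{R}^k$ contributes $k$ parameters and whose diagonal covariance $\boldsymbol{\Sigma}_u = \operatorname{diag}(\sigma_u^1, \dots, \sigma_u^k)$ contributes a further $k$ parameters, so each node carries exactly $2k$ free parameters and the node representations account for $2k|V|$. The loss in Eq.~(\ref{eq:objective}) contains the single additional trainable scalar $\tau$, giving $2k|V|+1$ in total; the exponent $\beta$ is held fixed during training and is therefore counted as the one hyperparameter rather than as a degree of freedom.

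Next I would treat the full method. Every summand of Eq.~(\ref{eq:objective}) is indexed by an ordered pair $u\ne v$ and depends on $\operatorname{d}_{u,v}$, so assembling the training signal requires precisely the collection $(\operatorname{d}_{u,v})_{u\ne v}$, i.e.\ the all-pairs shortest-path (APSP) distances; this identifies sample generation with the APSP problem. For sparse graphs I would invoke Johnson's algorithm~\cite{johnson1977efficient}, which reweights edges via a single Bellman--Ford pass and then runs Dijkstra from each source, yielding the stated $O(|V|^2\log|V| + |V||E|)$ bound. For the objective itself there are $|V|(|V|-1) = O(|V|^2)$ terms, and each term needs one KL evaluation (closed form for $\lambda=2$, or a fixed number of importance samples otherwise) plus $O(1)$ arithmetic, so evaluating the loss costs $O(|V|^2)$.

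Finally I would analyse the scalable variant by separating preprocessing, sampling, and counting. The singularity set is produced by running Tarjan's algorithm~\cite{tarjan1972} to extract the strongly connected components together with a topological order of the condensation, costing $O(|V|+|E|)$; drawing $B$ infinite-distance partners per node from this order adds $O(B|V|)$. The neighborhood set is produced by a truncated bidirectional breadth-first search from each node that halts once $B$ fresh vertices are discovered, contributing $O(B)$ per node and $O(B|V|)$ in aggregate, while the edge scans that drive these searches are charged against $|E|$. Summing the three contributions gives $O((B+1)|V|+|E|)$, and since each node yields $O(B)$ close partners and $O(B)$ infinite partners, the reduced loss $\widetilde{\mathcal{L}}$ has $O(B|V|)$ terms, as claimed.

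The main obstacle I expect is the time accounting for the truncated breadth-first searches: naively an early-stopped BFS can still scan the adjacency lists of high-degree vertices, so the delicate point is to argue, via an amortized charging of edge traversals to $|E|$ across all $|V|$ searches (or under the sparsity assumption already in force), that the aggregate exploration cost collapses to the linear $O((B+1)|V|+|E|)$ term rather than degrading to something like $O(B|E|)$ or $O(|V||E|)$. The remaining claims are routine once this charging argument is in place.
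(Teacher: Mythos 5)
Your proof follows essentially the same route as the paper's: the same parameter count ($2k$ per node plus the trainable $\tau$, with $\beta$ set aside as the hyperparameter), Johnson's algorithm for the all-pairs shortest paths, and Tarjan's algorithm plus truncated breadth-first searches for the scalable variant, ending with the same $O(B|V|)$ term count. The one point where you go beyond the paper --- worrying about whether the early-stopped BFS runs really cost only $O((B+1)|V|+|E|)$ in aggregate rather than degrading toward $O(|V||E|)$ --- is a legitimate concern that the paper's own proof simply asserts under the assumption $B\ll|V|$ without the charging argument you sketch, so your proposal matches (and is slightly more self-aware than) the published argument.
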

\begin{proof}
The embedding maps each node $u$ to a k-variate normal distribution $\mathcal{N}_u$ with mean $\boldsymbol{\mu}_u=(\mu_u^1,\dots, \mu_u^k)$ and covariance $\boldsymbol{\Sigma}_u=\operatorname{diag}(\sigma_u^1,\dots \sigma_u^k)$, which are in total $2k|V|$ parameters.
With the trainable $\tau$ in our objective function, we have in total $2k|V| + 1$ degrees of freedom.

Johnson's algorithm \cite{johnson1977efficient} solves the problem of all pair shortest path length in $O(|V|^2\log|V| + |V||E|)$.
The result are the $|V|^2$ distances, which are used in the full loss function. 

For the scalable variant, we need to perform Tarjan's algorithm \cite{tarjan1972} one time, which has time complexity of $O(|V|+|E|)$.
This assigns every node its strongly connected component and at the same time returns a topological sorting of the strongly connected component.
Using this, the sampling of infinities reduces to sampling from an array, which needs for $B$ samples $O(B)$.
The neighborhood terms can be retrieved using a breadth-first search, which stops after finding $B$ new samples.
Under the assumption of $B\ll |V|$, the total time complexity is $O((B+1)|V|+|E|)$.
Finally, the approximated loss function uses two sums over each $B|V|$ elements. 
\end{proof}

\subsection{Algorithm}
\label{app:algorithm}
To clarify our two algorithms, we include the pseudo-code of the full algorithm (Algorithm \ref{alg:full}) and the scalable variant (Algorithm \ref{alg:scalable}).
The main differences between Algorithm \ref{alg:scalable} and Algorithm \ref{alg:full} is the reduced amount of pre-processing needed and consequently the reduced amount of elements in the loss function $\mathcal{\Tilde{L}}$.
Lemma \ref{lemma:complexity} gives the complexity for pre-processing and an optimization step.

\begin{algorithm}
	\caption{Full algorithm}
	\label{alg:full}
	\begin{algorithmic}[1]
        \REQUIRE Graph $G=(V, E)$
	    \STATE Pre-processing: Calculate all graph distances $\operatorname{d}_{u,v}$, for all $u,v\in V$
		\STATE Initialization: Uniformly sample random values for distribution parameters $\boldsymbol{\mu}_u, \boldsymbol{\Sigma}_u$ for each node $u\in V$
		\STATE Optimizing: Update $\boldsymbol{\mu}_u, \boldsymbol{\Sigma}_u, \tau$ using Adam optimizer and loss function $\mathcal{L}$
	\end{algorithmic}
\end{algorithm}

\begin{algorithm}
	\caption{Scalable algorithm}
	\label{alg:scalable}
	\begin{algorithmic}[1]
        \REQUIRE Graph $G=(V, E)$
	    \STATE Pre-processing: Calculate the strongly connected components and their topological sorting using Tarjans algorithm. Sample for each node $u\in V$ up to $B$ nodes $v$ with $\operatorname{d}_{u,v}=\infty$.
	    \STATE Pre-processing: Perform for each node $u$ a breadth-first-search for the successors and another for the predecessors to sample up to $B$ nodes with $\operatorname{d}_{u,v} < \infty$
		\STATE Initialization: Uniformly sample random values for distribution parameters $\boldsymbol{\mu}_u, \boldsymbol{\Sigma}_u$ for each node $u\in V$
		\STATE Optimizing: Update $\boldsymbol{\mu}_u, \boldsymbol{\Sigma}_u, \tau$ using Adam optimizer and loss function $\mathcal{\Tilde{L}}$
	\end{algorithmic}
\end{algorithm}

\subsection{Proof of Theorem \ref{theorem:properties}}
\label{app:theorem1}

Let us consider a surface $S = \left\{ {{\mathbf{y}}\left( {\mu ,\sigma } \right):\left( {\mu ,\sigma } \right) \in \mathbb{R} \times {\mathbb{R}_ + }} \right\}$ of natural logarithms of PDFs of univariate exponential power distributions parameterized by their expectation and deviation, $\mathbf{y}\left( \mu ,\sigma  \right)=\ln f\left( x|\left( \mu ,\sigma  \right) \right)=\ln \left( \frac{\lambda }{{2\sigma \Gamma \left( {1/\lambda } \right)}}{e^{ - {{\left( {\frac{{x - \mu }}{\sigma }} \right)}^\lambda }}} \right)$. For given $ \mu ,\sigma, \lambda  $ let $X$ have an exponential power distribution with those parameters and define the inner product at $\left( \mu ,\sigma  \right)$ as $\left\langle  {{\mathbf{y}}_{1}} | {{\mathbf{y}}_{2}} \right\rangle \left( \mu ,\sigma  \right):=\mathbb{E}\left[ \ln \left( {{p}_{1}}\left( X|\mu ,\sigma  \right) \right)\cdot \ln \left( {{p}_{2}}\left( X|\mu ,\sigma  \right) \right) \right]$. Let us calculate the metric coefficients:

\begin{align*}
  {g_{11}}\left( {a\mu ,\sigma } \right) =&  - \mathbb{E}\left[ { - \frac{{{a^2}\lambda \left( {\lambda  - 1} \right)}}{{{\sigma ^2}}}{{\left( {\frac{{X - a\mu }}{\sigma }} \right)}^{\lambda  - 2}}} \right] \\
  =& \frac{{{a^2}\lambda \left( {\lambda  - 1} \right)}}{{{\sigma ^\lambda }}}\mathbb{E}\left[ {{{\left( {X - a\mu } \right)}^{\lambda  - 2}}} \right]  \\ 
  =& \frac{{{a^2}\lambda \left( {\lambda  - 1} \right)}}{{{\sigma ^\lambda }}}\frac{{{\sigma ^{\lambda  - 2}}\Gamma \left( {\frac{{\lambda  - 1}}{\lambda }} \right)}}{{\Gamma \left( {\frac{1}{\lambda }} \right)}} = \frac{{{a^2}\lambda \left( {\lambda  - 1} \right)}}{{{\sigma ^2}}}\frac{{\Gamma \left( {1 - \frac{1}{\lambda }} \right)}}{{\Gamma \left( {\frac{1}{\lambda }} \right)}}, \\ 
{g_{12}}\left( {a\mu ,\sigma } \right) =& {g_{21}}\left( {a\mu ,\sigma } \right) =  - \mathbb{E}\left[ { - \frac{{a{\lambda ^2}{{\left( {X - \mu } \right)}^{\lambda  - 1}}}}{{{\sigma ^{\lambda  + 1}}}}} \right] = \frac{{a{\lambda ^2}}}{{{\sigma ^{\lambda  + 1}}}}\mathbb{E}\left[ {{{\left( {X - \mu } \right)}^{\overbrace {\lambda  - 1}^{{\text{odd}}}}}} \right] = 0, \\
  {g_{22}}\left( {a\mu ,\sigma } \right) =&  - \mathbb{E}\left[ {\frac{1}{{{\sigma ^2}}} - \lambda \left( {\lambda  + 1} \right)\frac{{{{\left( {X - a\mu } \right)}^\lambda }}}{{{\sigma ^{\lambda  + 2}}}}} \right]  \\ 
   =& - \frac{1}{{{\sigma ^2}}} + \frac{{\lambda \left( {\lambda  + 1} \right)}}{{{\sigma ^{\lambda  + 2}}}}\mathbb{E}\left[ {{{\left( {X - a\mu } \right)}^\lambda }} \right] =  - \frac{1}{{{\sigma ^2}}} + \frac{{\lambda \left( {\lambda  + 1} \right)}}{{{\sigma ^2}}}\frac{{\Gamma \left( {\frac{{\lambda  + 1}}{\lambda }} \right)}}{{\Gamma \left( {\frac{1}{\lambda }} \right)}} = \frac{\lambda }{{{\sigma ^2}}}
\end{align*}

We see that that for $a = \sqrt {\frac{{\Gamma \left( {\frac{1}{\lambda }} \right)}}{{\left( {\lambda  - 1} \right)\Gamma \left( {1 - \frac{1}{\lambda }} \right)}}} $ the metric tensor at $\left(a \mu ,\sigma  \right)$ is given by 
\[{g_F}\left( {a\mu ,\sigma } \right) = \lambda \left[ {\begin{array}{*{20}{c}}
  {{\sigma ^{ - 2}}}&0 \\ 
  0&{{\sigma ^{ - 2}}} 
\end{array}} \right] = \lambda {g_{{H^2}}}\left( {\mu ,\sigma } \right).\] 

Similarly, for a multivariate exponential power distribution with parameters ${\boldsymbol{\mu }} = \left( {{\mu _1}, \ldots ,{\mu _k}} \right),{\mathbf{\Sigma }} = \operatorname{diag}\left( {{\sigma _1}, \ldots ,{\sigma _k}} \right),\lambda $, we get ${g_F}\left( {a{\boldsymbol{\mu }},{\mathbf{\Sigma }}} \right) = \lambda \operatorname{diag}\left( {\sigma _1^{ - 2}, \ldots ,\sigma _k^{ - 2},\sigma _1^{ - 2}, \ldots ,\sigma _k^{ - 2},} \right) = \lambda {g_{{H^{2k}}}}\left( {{\boldsymbol{\mu }},{\mathbf{\Sigma }}} \right)$ so we get 
\begin{align*}
  {\left\| {{\mathbf{x}}\left( {a{\boldsymbol{\mu }},{\mathbf{\Sigma }}} \right)} \right\|_F} =& \sqrt {{{\mathbf{x}}^T}{g_F}\left( {a{\boldsymbol{\mu }},{\mathbf{\Sigma }}} \right){\mathbf{x}}}  = \sqrt {{{\mathbf{x}}^T}\lambda {g_{{H^2}}}\left( {{\boldsymbol{\mu }},{\mathbf{\Sigma }}} \right){\mathbf{x}}}  = \sqrt \lambda  {\left\| {{\mathbf{x}}\left( {{\boldsymbol{\mu }},{\mathbf{\Sigma }}} \right)} \right\|_{{H^2}}} \Rightarrow  \hfill \\
  {d_F}\left( {{\mathbf{x}}\left( {a{{\boldsymbol{\mu }}_{\mathbf{1}}},{{\mathbf{\Sigma }}_{\mathbf{1}}}} \right),{\mathbf{x}}\left( {a{{\boldsymbol{\mu }}_{\mathbf{2}}},{{\mathbf{\Sigma }}_{\mathbf{2}}}} \right)} \right) =& \sqrt \lambda  {d_{{H^{2k}}}}\left( {\left( {{{\boldsymbol{\mu }}_{\mathbf{1}}},{{\mathbf{\Sigma }}_{\mathbf{1}}}} \right),\left( {{{\boldsymbol{\mu }}_{\mathbf{2}}},{{\mathbf{\Sigma }}_{\mathbf{2}}}} \right)} \right) \\
  =& \sqrt {\lambda \sum\limits_{i = 1}^k {{\text{arcos}}{{\text{h}}^2}\left( {1 + \frac{{{{\left( {{\mu _{i1}} - {\mu _{i2}}} \right)}^2} + {{\left( {{\sigma _{i1}} - {\sigma _{i2}}} \right)}^2}}}{{2{\sigma _{i1}}{\sigma _{i2}}}}} \right)} } 
\end{align*} 
from which it follows that \[{d_F}\left( {{\mathbf{x}}\left( {{{\boldsymbol{\mu }}_{\mathbf{1}}},{{\mathbf{\Sigma }}_{\mathbf{1}}}} \right),{\mathbf{x}}\left( {{{\boldsymbol{\mu }}_{\mathbf{2}}},{{\mathbf{\Sigma }}_{\mathbf{2}}}} \right)} \right) = \sqrt {\lambda \sum\limits_{i = 1}^k {{\text{arcos}}{{\text{h}}^2}\left( {1 + \frac{{{{\left( {\frac{{{\mu _{i1}} - {\mu _{i2}}}}{a}} \right)}^2} + {{\left( {{\sigma _{i1}} - {\sigma _{i2}}} \right)}^2}}}{{2{\sigma _{i1}}{\sigma _{i2}}}}} \right)} } \]

The detailed derivation of the curvature for the univariate case is given in \cite{yuan2019geometric}, where the closed form solution for $\alpha$-Gaussian curvature is given. In this paper, we are interested only in the case of $\alpha=0$ Gaussian curvature of the Riemannian metric, which leads to the $-1/\lambda$ curvature.

\subsection{Derivation of mapping from Gaussian statistical manifold to hyperboloid}
\label{app:issomerty}
To see how the points on the manifold relate to each other in terms of distance, we will map them to the upper half of a two-sheathed hyperboloid (see Figure \ref{img:hyperboloid}) while preserving their distance up to a multiplicative constant factor. To do this, we first map a point $\left( {\mu ,\theta } \right)$ to the Poincaré half-plane through the mapping $\left( {\mu ,\theta } \right) \mapsto \left( {\mu /\sqrt 2 ,\theta } \right)$, which can be shown to be a similarity with the similarity coefficient ${\sqrt 2 }$ ~\cite{Burbea1982}.
Then we isometrically map the Poincaré half-plane to the Poincaré disc by using the Cayley mapping~\cite{Hayter}.
We finish by mapping the Poincaré disc to the above-mentioned hyperboloid by using the inverse of the stereographic projection, which can be shown is also an isometry~\cite{staley2008}, where the distance of two points on the hyperboloid is the length of the curve which is an intersection of the hyperboloid and the plane passing through the origin and those two points. Note that the composition of these three maps is also a similarity with the similarity coefficient $\sqrt 2 $.

We have combined existing~\cite{Burbea1982, Hayter, staley2008}, well-known results about similarities, isometries and geodesics being mapped by those maps between the Gaussian stochastic manifold and various models for the hyperbolic geometry (Poincaré half-plane, Poincaré disc and a two-sheathed hyperboloid), using minor adjustments to suit our needs when necessary, to allow easy, direct visualisation of distances between the points on the stochastic manifold in the natural framework in which they were defined (hyperbolic geometry).

\begin{figure}
  \centering
  \includegraphics[scale=.3]{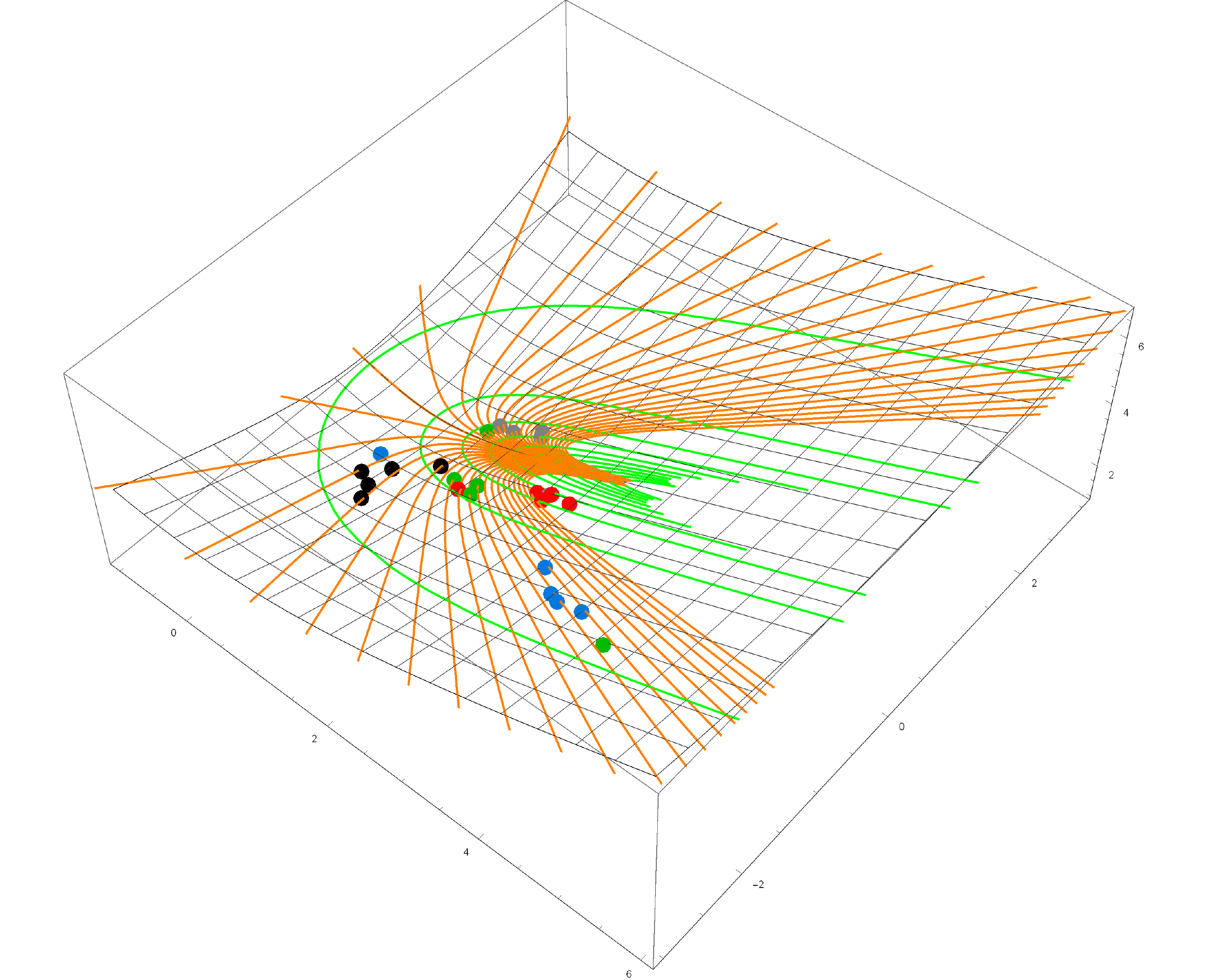}
  \caption{Visualization of the hyperboloid geometry of directed graphs for one-dimensional Gaussian distributions. The coordinate frame is shown with the green ($\sigma$) and orange lines ($\mu$). The nodes from the synthetic directed network from Figure 1 are shown as points with different colors of groups, depending on the position in the network. The embeddings for nodes ($\mu,\sigma$) in the statistical manifold are found by minimizing the objective function (\ref{eq:objective}) and then making the mappings to the hyperboloid model. 
  }
\label{img:hyperboloid}
\end{figure}

It is a well-know fact that the Cayley map $\left( {x,y} \right) \mapsto \left( {\frac{{{x^2} + {y^2} - 1}}{{{x^2} + {{\left( {y + 1} \right)}^2}}},\frac{{ - 2x}}{{{x^2} + {{\left( {y + 1} \right)}^2}}}} \right)$ is an isometry from the Poincaré half-plane to the Poincaré disc.
It is also known that the stereographic projection of the upper half of the two-sheathed hyperboloid ${z^2} - {x^2} - {y^2} = 1$ through the point $\left( {0,0, - 1} \right)$ to the plane $z = 0$ is an isometry from the hyperboloid onto the Poincaré disc, so it's inverse, which can be computed to be \linebreak $\left( {x,y} \right) \mapsto \left( {\frac{x}{{\sqrt {1 - {x^2} - {y^2}} }},\frac{y}{{\sqrt {1 - {x^2} - {y^2}} }},\frac{1}{{\sqrt {1 - {x^2} - {y^2}} }}} \right)$, is also an isometry. To obtain the inverse, we must find $t$ such that the point $t\left( {\left( {x,y,0} \right) - \left( {0,0, - 1} \right)} \right) = \left( {tx,ty,t} \right)$ lies on the hyperboloid, i.e.\ ${t^2} - {\left( {tx} \right)^2} + {\left( {ty} \right)^2} = 1$ $\Rightarrow t = \frac{1}{{\sqrt {1 - {x^2} - {y^2}} }}$ since we're interested only in the upper sheath. From here, we read that the inverse of the stereographic projection is as stated above.

Finally, it can be shown that a similarity $f\colon{S_1} \to {S_2}$ maps a geodesic $\alpha $ on ${S_1}$ to a curve $f \circ \alpha $ on ${S_2}$ which can be reparameterized by arc length and such reparameterization $f \circ \alpha  \circ \varphi $ is a geodesic on ${S_2}$. If $f$ has the similarity coefficient $c$, then it is easy  to see that ${d_{{S_1}}}\left( {x,y} \right) = c{d_{{S_2}}}\left( {f\left( x \right),f\left( y \right)} \right)$ by using those two geodesics and the fact that geodesics are shortest lines between points they pass through.

\subsection{Further details of experiments}
\label{app:experiments_details}

For baseline HOPE, as input HOPE operates on any proximity measure $S$ in the form $S=M_g^{-1}M_l$, we use HOPE with $M_g = I$ the identity matrix and $M_l = ((1-\delta_{u,v})(\operatorname{d}_{u,v}+\varepsilon)^{-1})_{u,v}$ the matrix of element-wise inverse distances shifted by a small $\varepsilon=10^{-6}$ to avoid division by zero.

The output of HOPE are two matrices $U^{s} \in \mathbb{R}^{|V|\times K}, U^{t}\in \mathbb{R}^{K\times |V|}$ and the approximated values are calculated via $U^{s}\cdot U^{t}.$

The initial means $\mu_u^i$ are drawn uniformly from $[0, 10]$ and the initial co-variances $\sigma_u^i$ are drawn uniformly from $[4, 7]$.
As initial value of $\tau$ we selected $2.5$.

For our full method, we used no batching for the political blogs network, $410$ batches for Cora and $2777$ batches for arXiv hep-th with and without shuffling between each epoch, where we saw an increased performance with shuffling especially for the larger datasets.
The approximated approach uses no batch for the variant with $B=10$ and $10$ batches for $B=100$.

The computation time of pre-processing and learning for (KL 10, KL 100, KL full) experiments were on blogs (10sec/1.5min/10min), Cora (15sec/2min/5.5hours) and arXiv hep-th (30sec/3min/1.5day).

\subsection{Further evaluation experiments}
\label{sec:further_experiments}
In this section, we report further results of experiments with different hyperparameters, another baseline for political blogs as well as the down-stream task graph reconstruction and experiments on an undirected graph.

First, we want to verify different hyperparameters of our method.
Table~\ref{tab:generalized_results} shows the results for different exponential power distributions, i.e.\ $\lambda \in \{2,4,8 \}$. 
The results were calculated using the proposed importance sampling, see Appendix~\ref{app:importanceSampling} for more details.
Since for higher $\lambda$ the evaluation measures do not increase significantly, we have fixed $\lambda=2$ for the other experiments, due to the analytical KL divergence in that case.

\begin{table}
    \centering
    \caption{Results of the exponential power distributions for $\lambda \in \{2,4,8 \}$ and 100 samples evaluated with Pearson correlation coefficient $\rho$, Spearman's rank correlation coefficient $r$ and mutual information (MI). 
  $\rho$ and $r$ were evaluated on all edges $(u,v)$ with $u\ne v$ using the values given by each method and the ground truth distance.
  The p-value for $\rho$ and $r$ are in all cases below $10^{-8}$.}
    \label{tab:generalized_results}
     \begin{tabular}{@{}lllll@{}}\toprule
      Network& & $\lambda=2$ & $\lambda=4$ & $\lambda=8$ \\\midrule
      Political  &  $\rho$ & $.77$ & $.78$ & $.78$ \\
      Blogs & $r$ & $.74$ & $.74$ & $.72$  \\
       & MI & $.57\pm.005$ & $.61 \pm .005$ &  $.61 \pm .005$  \\
      \bottomrule
      \end{tabular} 
\end{table}

Another hyperparameter is the embedding dimension, which translates to selecting of k-variate exponential power distributions.
Table~\ref{tab:higher_embedding_dimension} reports the results for $k\in \{2,5,10, 50 \}$.
Up to the value $k=10$ all values increase, whereby in particular the mutual information registers the strongest increase. A further increase of the dimension to k=50 does not seem to bring any further advantage. Since the focus of this publication is on low-dimensional embeddings, we have set $k=2$. 

\begin{table}
    \centering
    \caption{Results of the k-variate exponential power distributions for $k\in \{2,5,10, 50 \}$ of KL (full) evaluated with Pearson correlation coefficient $\rho$, Spearman's rank correlation coefficient $r$ and mutual information (MI). 
  $\rho$ and $r$ were evaluated on all edges $(u,v)$ with $u\ne v$ using the values given by each method and the ground truth distance.
  The p-value for $\rho$ and $r$ are in all cases below $10^{-8}$.}
    \label{tab:higher_embedding_dimension}
     \begin{tabular}{@{}llllll@{}}\toprule
      Network& & $k=2$ & $k=5$ & $k=10$ & $k=50$ \\\midrule
      Political  &  $\rho$ & $.88$ & $.90$ & $.91$& $.88$ \\
      Blogs & $r$ & $.89$ & $.90$ & $.92$ & $.90$ \\
       & MI & $.85\pm.006$ & $.90 \pm .007$ &  $.97 \pm .006$ &  $.90 \pm .006$  \\
      \bottomrule
      \end{tabular} 
\end{table}

For the political blogs network, we have evaluated as an additional baseline the elliptical embedding~\cite{Muzellec18}. 
The work~\cite{Muzellec18} of Muzellec et al. studies the problem of embedding objects as elliptical probability distributions, which are the generalization of Gaussian multivariate densities. Their work is rooted in the optimal transport theory by using the Wasserstein distance. The physical interpretation of this distance in the case of optimal transport is given by the cost of moving mass from one distribution to another one (see Monge–Kantorovich transportation problem \cite{OptTransport}). In particular, for univariate Gaussian distributions, the Wasserstein distance between embedded points becomes the two-dimensional Euclidean distance of ($\mu_1,\sigma_1$) and ($\mu_2,\sigma_2$), i.e. flat geometry.
In our case, the geometry of univariate Gaussians has constant negative curvature, and distance is measured with the Fisher distance. Our work is rooted in statistical manifold theory, where the Fisher distances arise from measuring the distinguishability between different distributions.
According to the results in Table~\ref{tab:elliptical}, we observe that this method is not outperforming our method.

\begin{table}
    \centering
    \caption{Comparision of results of elliptical embedding on political blogs and our method}
    \label{tab:elliptical}
     \begin{tabular}{@{}lS[table-format=3.2]S[table-format=3.2]l@{}}\toprule
      Method& {$\rho$} & {$r$} & MI  \\\midrule
      Elliptical  & -.17 & -.14  & $.04 \pm .004$ \\
      KL (full)  & .88 & .89  & $.85 \pm .006$ \\      
      APP   & .16 & .29  & $.15\pm.006$ \\
      HOPE   & .45 & .45  & $.65\pm.007$ \\
      DeepWalk   & .25 & .24  & $.12\pm.005$ \\
      Graph2Gauss   & -.17 & -.33  & $.09\pm.005$ \\      
      \bottomrule
      \end{tabular} 
\end{table}

Our method was created with the motivation to embed directed graphs.
However, we also evaluated the performance on an undirected example. 
We have selected Hamsterster~\cite{konect:2016:petster-hamster} from the Koblenz Network Collection~\cite{kunegis2013konect}.
With $2\, 426$ nodes and $16\, 631$ edges this friendship network has a medium size.
The results in Table~\ref{tab:results_hamster} show that even in this scenario our full method outperforms the others.
Our intuition to these results is the following:
Although KL divergence is an asymmetric function, in special cases it can also become symmetric. E.g. in the case of two Gaussian distributions with equal standard deviations, the KL divergence is symmetric. This demonstrates the generality of representation.  Note that we did not set the additional equality constraint on the standard deviation parameters in the learning phase for this experiment.
However, this only works, if all data is available and thus our approximation method does not generalize in a similar quality as seen before from the training samples to the full data set in the undirected case.

\begin{table}
    \centering
    \caption{Results on undirected network Hamsterster}
    \label{tab:results_hamster}
    \small{
     \begin{tabular}{@{}lllllllll@{}}\toprule
      Network& & APP & HOPE & DeepWalk & Graph2Gauss & KL (10) & KL (100)  & KL (full) \\\midrule
      Hamsterster  &  $\rho$ & $-.03$ & $.23$ & $.36$ & $-.26$ & $.17$ & $.19$ & $\mathbf{.91}$ \\
       & $r$ & $.45$ & $.37$ & $.37$ & $-.76$ & $.12$ & $.12$ & $\mathbf{.89}$ \\
      & avg. MI & $.29$ & $.43$ &  $.10$ & $.45$ & $.63$ & $.64$ & $\mathbf{.89}$ \\
      & std. MI & $.006$ & $.007$ &  $.006$ & $.005$ & $.005$ & $.004$ & $.005$ \\
      \bottomrule
    \end{tabular}  }
\end{table}

As last additional experiment, we have verified the performance of our method on the down-stream task graph reconstruction. 
In particular, for every node $u$ with outdegree $\theta_u$, we retrieve the $\theta_u$ best candidate successors $\mathcal{N}^\text{out}_{\theta_u}(u)$ from each embedding and compare them to the direct successors of node u. To assess the performance, precision is computed
\[
\operatorname{Precision}^\text{out} = \frac{\sum_u \mathcal{N}^\text{out}_{\theta_u}(u) \cap \text{Successors}(u)}{|E|}.
\]
This tells us the ratio of actual neighboring links the embedding can extract. The same is done for incoming links, using the in-degree of each node. This kind of task was previously used in several papers that study graph embeddings e.g.~\cite{tsitsulin2018verse} and \cite{Avishek18}. 

Table~\ref{tab:graph_reconstruction} shows the resulting in- and out-precisions for the political blogs network.
Although, on average, our method (KL) is performing well, it was not designed to encode only the local neighborhood but rather the whole spectrum of finite and infinite distances.
Our representation is designed for preserving the global geodesic information of directed graphs in an unsupervised setting. To excel at other supervised tasks, one would have to modify the loss function to include additional terms more suitable for that down-streaming task. 

\begin{table}
    \centering
    \caption{Results of graph reconstruction for the political blogs network}
    \label{tab:graph_reconstruction}
     \begin{tabular}{@{}lll@{}}\toprule
      Method& {out-degree precision} & {in-degre precision}  \\\midrule
        APP & 0.1077 & 0.2624 \\
        HOPE &0.1010 & 0.1125 \\
        DeepWalk &0.2620 & 0.1669 \\
        Graph2Gauss &0.0258& 0.0003 \\
        Elliptical &0.0383 & 0.0250 \\
        KL (full) &0.2861 & 0.2329 \\
      \bottomrule
      \end{tabular} 
\end{table}

\subsection{Correction of Euclidean gradient}
\label{app:correction}
In section 3, we have deduced that the steepest direction is given by $\Tilde{\nabla} L = G^{-1}\nabla L$.
In other words, in each step of our optimization, we need to correct \cite{Amari1998} the Euclidean gradient by the inverse of Fisher information matrix at the respective point.
To be more specific, to update the representation of a node with $\sigma^1, \dots \sigma^k, \mu^1, \dots, \mu^k$ we apply
\[
\Tilde{\nabla}_{\frac{\partial}{\partial \sigma^i}} L = \frac{(\sigma^i)^2}{c_1}\nabla_{\frac{\partial}{\partial \sigma^i}} L, \quad
\Tilde{\nabla}_{\frac{\partial}{\partial \mu^i}} L =
\frac{(\sigma^i)^2}{c_2}\nabla_{\frac{\partial}{\partial \mu^i}} L,
\]
where $c_1$ and $c_2$ are the constants from Theorem \ref{theorem:properties}.
As we can see in Figure \ref{img:effect_of_correction}, the relative improvement of the corrected gradient decreases with the network size.
\begin{figure}
    \centering
    \includegraphics[width=0.6\linewidth]{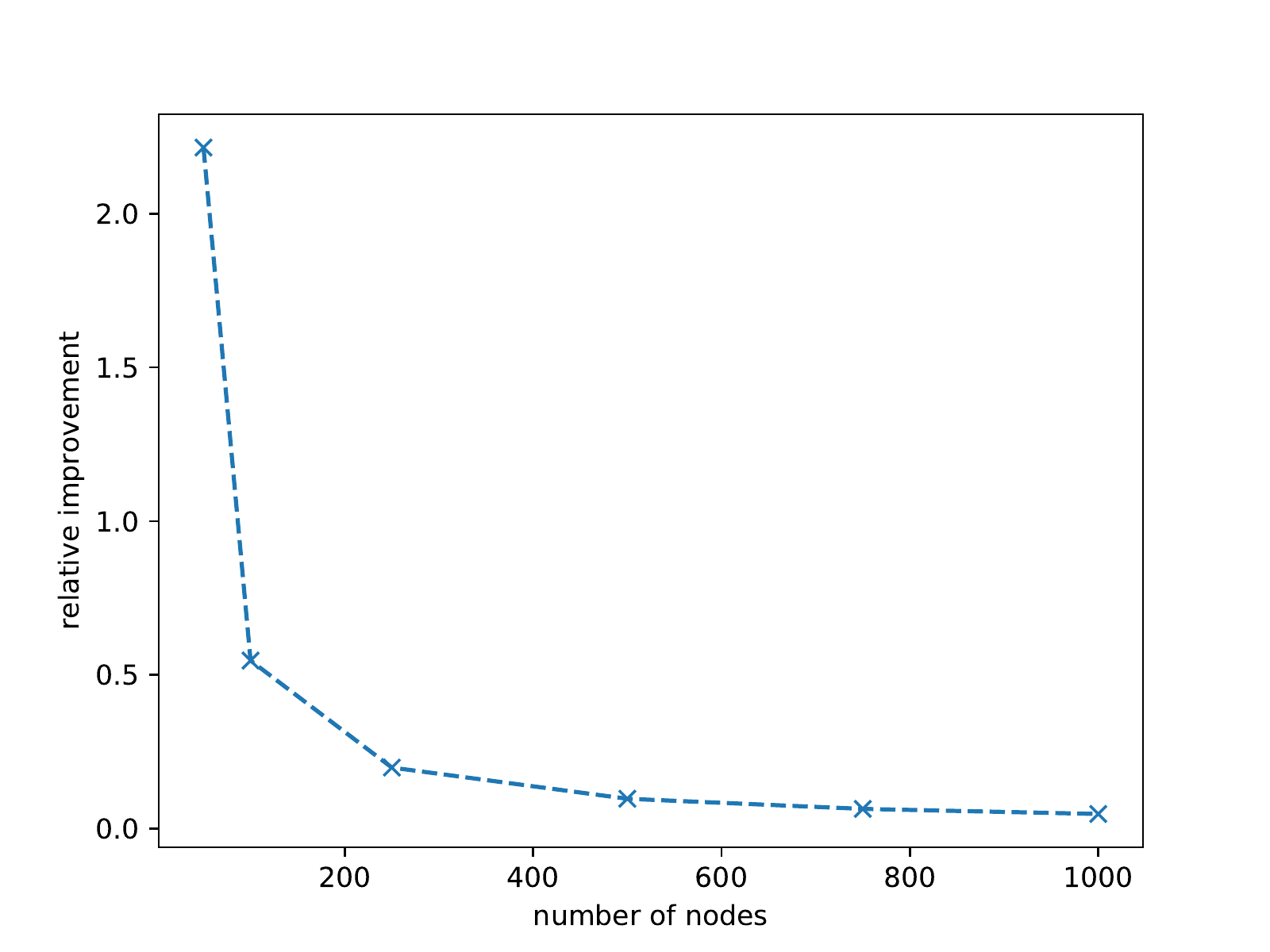}
    \caption{Relative effect of correction $\Tilde{\nabla} L$ in comparison to $\nabla L$. Starting from the same representation, we optimized with gradient descent optimization with $\Tilde{\nabla} L$ and $\nabla L$ with learning rate $1*10^{-6}$. Then the relative improvement $\frac{\Delta_{\nabla L}L - \Delta_{\Tilde{\nabla} L}L}{\Delta_{\nabla L}L}$ was calculated and averaged over $1000$ random initialization, where $\Delta_{\nabla L}L$ is the value of our loss function after applying gradient descent with gradient $\nabla L$. For this experimented we generated directed random networks with the Erdős–Rényi model for $p=.15$ and $n\in\{50, 100, 250, 500, 750, 1000\}$.
    }
    \label{img:effect_of_correction}
\end{figure}

\subsection{Importance sampling}
\label{app:importanceSampling}



As there exists no closed form of KL divergence for the generalized exponential power distributions, we propose an efficient importance sampling Monte Carlo estimation, which is applicable for an even more distributions.
In particular for this class of exponential power distribution, we perform
\[
\operatorname{KL}_{u,v} = \operatorname{KL}(p_u^\lambda(x), q_v^\lambda(x)) 
= \int p_u^\lambda(x) \log \frac{p_u^\lambda(x)}{q_v^\lambda(x)} \operatorname{dx} = \mathbb{E}_{x \sim p_u^\lambda(x)}\left[\log \frac{p_u^\lambda(x)}{q_v^\lambda(x)} \right].
\]
However, one can also re-write the integral in order to use the easy sampling function $\psi_{\lambda_*}(x)$:
\[
\int p_u^\lambda(x) \log \frac{p_u^\lambda(x)}{q_v^\lambda(x)} \operatorname{dx}= \int 
\Phi_{\lambda_*}(x) \frac{p_u^\lambda(x)}{\Phi_{\lambda_*}(x)} \log \frac{p_u^\lambda(x)}{q_v^\lambda(x)} \operatorname{dx}  = \mathbb{E}_{x \sim \psi_{\lambda_*}(x)} \left[\frac{p_{\lambda}(x)}{\psi_{\lambda_*}(x)}\log\frac{p_{\lambda}(x)}{q_{\lambda}(x)}\right].
\]
For the proposal function, we will use the same class of distributions
with parameter $\lambda_* = 2$. When $\lambda \ge \lambda_*$ due to the properties of exponential power distributions, the proposal distribution is concentrated around the mean from target distribution $p_u^\lambda(x)$ so the variance of estimator is relatively low.  
Finally, the KL divergence is approximated with $m$ samples $x_1$, ...$x_m$ from $\psi_{\lambda_*}(x)$:
\[
\mathbb{E}_{x \sim \psi_{\lambda_*}(x)} \left[\frac{p_{\lambda}(x)}{\psi_{\lambda_*}(x)}\log\frac{p_{\lambda}(x)}{q_{\lambda}(x)}\right] \approx
\frac{1}{m} \sum_{i=1}^m \left[\frac{p_{\lambda}(x_i)}{\psi_{\lambda_*}(x_i)}\log\frac{p_{\lambda}(x_i)}{q_{\lambda}(x_i)}\right].
\]

In the special case of $\lambda=2$ the distributions are multivariate Gaussians and the KL divergence has an analytical form. 
For two nodes $u, v\in V$ represented with $(\boldsymbol{\Sigma}_u, \boldsymbol{\mu}_u)$ respectively $(\boldsymbol{\Sigma}_v, \boldsymbol{\mu}_v)$ we have 
\[
\operatorname{KL}_{u,v}
= \frac{1}{2}\left\{\operatorname{tr}(\boldsymbol{\Sigma}_v^{-1}\boldsymbol{\Sigma}_u) + (\boldsymbol{\mu}_v-\boldsymbol{\mu}_u)^T\boldsymbol{\Sigma}_v^{-1}(\boldsymbol{\mu}_v-\boldsymbol{\mu}_u) - k + \ln\frac{\det\boldsymbol{\Sigma}_v}{\det\boldsymbol{\Sigma}_u}\right\}.
\]

\end{document}